\def\eqref#1{equation~\ref{#1}}
\def\1{\bm{1}}
\DeclareMathAlphabet{\mathsfit}{\encodingdefault}{\sfdefault}{m}{sl}
\SetMathAlphabet{\mathsfit}{bold}{\encodingdefault}{\sfdefault}{bx}{n}
\newcommand{\R}{\mathbb{R}}
\theoremstyle{plain}
\newtheorem{theorem}{Theorem}[section]
\newtheorem{proposition}[theorem]{Proposition}
\theoremstyle{definition}
\newtheorem{definition}[theorem]{Definition}
\theoremstyle{remark}
\newtheorem{remark}[theorem]{Remark}
\title{Almost Equivariance via Lie Algebra Convolutions}
\author{%
  Daniel McNeela\\
  Department of Computer Sciences\\
  University of Wisconsin, Madison\\
  Madison, WI 53703\\
  \texttt{mcneela@wisc.edu} \\
}
\begin{document}

\maketitle

\begin{abstract}
  Recently, {\it equivariant neural networks\/} have become an important topic of research in machine learning. 
  However, imbuing an architecture with a specific group equivariance imposes a strong prior on the types of data 
  transformations that the model expects to see. While strictly-equivariant models enforce symmetries, 
  real-world data does not always conform to such strict equivariances.
  In such cases, the prior of strict equivariance can actually prove too strong and cause models 
  to underfit on real-world data. Therefore, in this work we study a closely related topic, 
  that of {\it almost equivariance}. We provide a definition of almost equivariance that 
  differs from those extant in the current literature
  and give a practical method for encoding almost equivariance in models by appealing to the Lie algebra of a Lie group.
  Specifically, we define {\it Lie algebra convolutions} and demonstrate that they offer several benefits over Lie group convolutions, 
  including being well-defined for non-compact Lie groups having non-surjective exponential map.
  From there, we pivot to the realm of theory and demonstrate parallel connections between the notions of equivariance and isometry and 
  those of almost equivariance and almost isometry. Finally, we demonstrate the validity of our approach by 
  benchmarking against datasets in fully equivariant and almost equivariant settings.
\end{abstract}

\section{Introduction}
The past few years have shown a surge in interest in {\it equivariant\/} model architectures,
those that explicitly impose symmetry with respect to a particular group acting on the 
model inputs. While data augmentation strategies have been proposed to make generic models exhibit greater symmetry
without the need for equivariant model architectures, much work has demonstrated that this is an inefficient
approach at best \citep{gerken22a,lafarge,wang2022data}. As such, developing methods for building neural network layers 
that are equivariant to general group actions is of great importance. 

More recently, {\it almost equivariance}, also referred to variously  as {\it approximate, soft,} or 
{\it partial equivariance}, has become a rich topic of study. The idea is that the symmetry 
constraints imposed by full equivariance are not always completely conformed to in real-world systems. 
For example, the introduction of external forces and certain boundary conditions 
into models of turbulence and fluid flow break many theoretical symmetry constraints. Accurately
modeling real-world physical systems therefore requires building model architectures 
that have a built-in notion of symmetry but that are not so constrained by it as to be incapable
of fully modeling the underlying system dynamics.

\section{Related Work}
\subsection{Strict Equivariance}
Much of the work in developing strictly-equivariant model architectures began with the 
seminal paper of \citet{cohenc16}, which introduced the group equivariant convolutional 
neural network layer. \citet{kondor18a} generalized this notion of equivariance and convolution
to the action of an arbitrary compact group. Further generalizations followed,
with the creation of convolutions \citep{finzi2020generalizing} and efficient MLP layers \citep{finzi21a} 
equivariant to arbitrary Lie groups. Other neural network types have also been studied 
through the lens of equivariance, for example, graph neural networks \citep{satorras21a}, \citep{nequip},
transformers \citep{2021lietransformer}, and graph transformers \citep{liao2023equiformer}. 
\citet{cohen2020general} consolidated much of this
work into a general framework via which equivariant layers can be understood as maps between spaces
of sections of vector bundles. Similar to our work, \citet{nimaalgebra} devised a convolutional layer on the 
Lie algebra designed to approximate group convolutional layers. 
However, their objective was to make the layer as close to equivariant as possible whereas our layer is designed to
be flexible so as to be capable of modelling almost equivariances.
Finally, rather than devising a new equivariant 
layer type, \citet{gruver2023the} developed a method based on the Lie derivative which
can be used to detect the degree of equivariance learned by an arbitrary model architecture.

\subsection{Almost Equivariance}
One of the first works on almost equivariance was \citet{finzi2021residual}, which introduced 
the {\it Residual Pathway Prior\/}. Their idea is to construct a neural network 
layer, $f$, that is the sum of two components, $A$ and $B$, where $A$ is a strictly equivariant 
layer and $B$ is a more flexible, non-equivariant layer
\[
  f(x) = A(x) + B(x)
\]
Furthermore, they place priors on the sizes of $A$ and $B$ such that a model trained using 
maximum a posteriori estimation is incentivized to favor the strict equivariance of $A$
while relying on $B$ only to explain the difference between $f$ and the fully symmetric 
prediction function determined by $A$. The priors on $A$ and $B$ can be defined 
so as to weight the layer towards favoring the use of $A$.

The approach taken in \citet{wang2022approximately} is somewhat different. They give an 
explicit definition of approximate equivariance, then model it via a
{\it relaxed group convolutional layer\/} wherein the single kernel, $\Psi$, of a strictly 
equivariant group convolutional layer is replaced with a set of kernels, ${\{\Psi_l\}}_{l=1}^L$.
This introduces a specific, symmetry-breaking dependence on a pair of group elements, $(g, h)$,
i.e.
\[
  \Psi(g, h) = \sum_{l=1}^L w_l(h) \Psi_l (g^{-1}h)
\]
Their full relaxed group convolution operation is then defined as follows
\begin{align*}
  [f_{\star_G}\Psi](g) &= \sum_{h \in G} f(h)\Psi(g, h) \\
  &= \sum_{h \in G} \sum_{l=1}^L f(h) w_l(h) \Psi_l (g^{-1}h)
\end{align*}

\citet{romero2022learning} take an altogether different approach.
They introduce a model, which they call the {\it Partial G-CNN},
and show how to train it to learn layer-wise levels of equivariance from data.
A key differentiator in their method is the learning of a probability distribution
over group elements at each group convolutional layer, allowing them to sample group 
elements during group convolutions.

More specifically, they define a {\it $G$-partially equivariant map}, $\phi$, as one 
that satisfies
\[
  \phi(g \cdot x) = g \cdot \phi(x) \quad \forall x \in X, g \in S
\]
where $S$ is a subset, but not necessarily a subgroup, of $G$. They then define 
a partial group convolution from $f: S^1 \to \R$ to $h: S^2 \to \R$ as 
\begin{align*}
  h(u) = (\psi \star f)(u) = &\int_{S^1} p(u)\psi(v^{-1}u)f(v)\ d\mu_G(v); 
\end{align*}
for $u \in S^2, v \in S^1$, where $p(u)$ is a probability distribution on $G$ and $\mu_G$ is the Haar measure.

In order to learn the convolution for one-dimensional, continuous groups, they parameterize 
$p(u)$ by applying a reparameterization trick to the Lie algebra of $G$. This allows them to 
define a distribution which is uniform over a connected set of group elements, 
$[u^{-1}, \ldots, e, \ldots, u]$, but zero otherwise. Thus they define a uniform 
distribution, $\mathcal{U}(\mathfrak{u} \cdot [-1, 1])$, with learnable $\mathfrak{u} \subset \mathfrak{g}$
and map it to the group via the exponential map, $\exp : \mathfrak{g} \to G$.

\citet{ouderaa2022relaxing} relax equivariance constraints by defining a non-stationary group convolution
\[
  h(u) = (k_{\theta} \star f)(u) = \int_G k_{\theta}(v^{-1}(u), v)f(v)\ d\mu(v)
\]
They parameterize the kernel by choosing a basis for the Lie algebra, $\mathfrak{g}$, of $G$ and defining elements, 
$g \in G$, as exponential maps of Lie algebra elements, i.e.
\[
  g = \text{exp}(a) = \text{exp}\left( \sum_{i=1}^n \alpha_i A_i \right)
\]
where $a \in \mathfrak{g}$ and $\{A_i\}$ is a basis for $\mathfrak{g}$.
In particular, they achieve fine-grained control over the kernel representation by choosing a basis of 
{\it Random Fourier Features (RFF)\/} for $\mathfrak{g}$.

Finally, \citet{petrache2023approximationgeneralization} provide a take on approximate equivariance rooted in 
statistical learning theory and provide generalization and error bounds on approximately equivariant architectures.

\subsection{Notions of Approximate Equivariance}
It's worthwhile to note that there are multiple notions of approximate, partial, and soft equivariance, only some of which we explicitly
address in this work.

The first type occurs when we only have partially observed data, for example, a single pose of a 3D object captured in a 2D image or an 
object occlusion in computer vision. \citet{wang2023a} refer to this as \textit{extrinsic equivariance} in that applying a group transformation
to an in-distribution data point transforms it to an out-of-distribution data point. This type of partial equivariance is often addressed via data augmentation.
We do not explicitly test our approach in this setting.

The second type occurs when we have noise in data that breaks equivariance. This is one setting we explicitly address.

The third type occurs when we have data that naturally exhibits almost equivariance. For example, data sampled from vector fields 
and PDEs governing natural physical processes often exhibit this quality. This is another setting we explicitly address.

Finally, there is what \citet{wang2023a} call \textit{incorrect equivariance}. This occurs when applying a group transformation to 
a data point qualitatively and quantitatively changes its label. For example, rotating the digit 6 by 180 degrees turns it into 
the digit 9 and vice versa. We do not explicitly address this in our method, but our model performs competitively on the Rot-MNIST 
classification task, indicating that it has the capability of accounting for incorrect equivariances in its modeling.

\section{Theory}
\subsection{Equivariance \& Almost Equivariance}
In this section, we seek to give a suitable definition of almost equivariance and establish the relationship between 
it and full equivariance. In defining almost equivariance of a model with respect to the action of some Lie group, $G$, 
we seek a definition that offers both theoretical insight as well as practical significance. We start by addressing 
the abstract case, in which we define almost equivariance for general functions on a Riemannian manifold. We then
drop to the level of practice and give a method for encoding almost equivariance into a machine learning model 
taking inputs on some data manifold. 

\begin{definition}
Let $G$ be a Lie group acting smoothly on smooth Riemannian manifolds $(M, g)$ and $(N, h)$ via the left actions $G \times M \to M$ and $G \times N \to N$
given by $(g, x) \mapsto g \cdot x$. Furthermore, let $f$ be a mapping of smooth manifolds, $f: M \to N$.
Then we say $f$ is \textit{equivariant} with respect to the action of $G$ if it commutes with the actions of $G$ on $M$ and $N$, i.e.
\[
\vspace{0.1cm}
  g \cdot f(x) = f(g \cdot x)
\vspace{0.1cm}
\]
\end{definition}
\begin{definition}
  Now, consider the same setup as in the previous definition. We say a function $f : M \to N$ is $\varepsilon$-almost equivariant if
  the following is satisfied
  \[
      d(f(g \cdot x), g \cdot f(x)) < \varepsilon
  \]
  for all $g \in G$ and $x \in M$, where $d$ is the distance metric on $N$. We can think of such a function as commuting with the actions of $G$ on $M$ and $N$ to within some $\varepsilon$.
\end{definition}

This definition is reminiscent of one given in a question posed by Stanislaw Ulam \citep{Ulam1960} concerning the stability of 
certain ``quasi'' group homomorphisms. In particular, given a group $\Gamma$, a group $G$ equipped with a distance $d$,
and a $\delta$-homomorphism, $\mu : \Gamma \to G$, satisfying 
\[
  d(\mu(xy), \mu(x)\mu(y)) < \delta
\]
for all $x, y \in \Gamma$, he asked whether there exists an actual group homorphism that is ``close'' to $\mu$ with respect to the distance, $d$. 
This question spurred research that showed the answer to be in the affirmative in a variety of cases, given certain restrictions on $\mu, \Gamma$, and $G$. 

In our case, we seek to address a similar question, that is, whether given an almost equivariant map as defined above, 
there exists a fully equivariant map that is ``close'' to it in the sense of being within some bounded distance, and vice versa.
If such maps do exist, we hope to determine under what conditions on $G$ and $M$ they can be found.

\subsection{Isometries, Isometry Groups, and Almost Isometries}
We begin our discussion of the theory underlying almost equivariance by studying the notions of {\it isometry\/} and {\it almost isometry}.
Because we often seek to impose in our models equivariance with respect to the action of the isometry group of a manifold from which data is sampled, we find it 
worthwhile to study isometries as a precursor to studying equivariance. 
An isometry is a mapping of metric spaces that preserves the distance metric. 
Some common types of metric spaces for which there exists a natural notion of isometry
are {\it normed spaces\/}, such as {\it Banach spaces\/}, and {\it Riemannian manifolds}.
In this work, we focus most of our analysis on Riemannian manifolds, as they are among 
the most general spaces upon which equivariant models operate \citep{bronstein2021geometric}.

\begin{definition}
  Let $(M, g)$ and $(\tilde{M}, \tilde{g})$ be Riemannian manifolds adding
  $\varphi : M \to \tilde{M}$ a diffeomorphism. Then we say $\varphi$ is an 
  {\it isometry of $M$\/} if $g = \varphi^* \tilde{g}$. In other words, the metric $\tilde{g}$
  can be pulled back by $\varphi$ to get the metric $g$.
\end{definition}

Next, we give a definition of an {\it$\varepsilon$-almost isometry\/}, which, in close analogy with almost equivariance,
is a mapping of manifolds that preserves the metric on a Riemannian manifold, $M$, to within some $\varepsilon$.

\begin{definition}
  Let $(M, g)$ and $(M, \tilde{g})$ be Riemannian manifolds, 
  $\varphi : M \to M$ a diffeomorphism, and $\varepsilon > 0$. 
  Then we say $\varphi$ is an {\it $\varepsilon$-almost isometry of $M$\/} if
  \[
      \left|(g - \varphi^* \tilde{g})_p(v, w)\right| < \varepsilon
  \]
for any $p \in M$ and any $v, w \in T_pM$ having unit norm. 
\end{definition}

In other words, $\varepsilon$-almost isometries are maps between the same Riemannian manifold equipped with two different metrics
for which the metric on pairs of vectors, $g_p(v, w)$, and the metric on their pushforward by $\varphi$,
$\tilde{g}_{\varphi(p)}(d\varphi_p(v), d\varphi_p(w))$, differ by at most $\varepsilon$.

Our definition of an $\varepsilon$-almost isometry is \textit{local} in the sense that it deals with the tangent spaces to 
points $p, \varphi(p)$ of a Riemannian manifold. However, we can naturally extend this definition to a \textit{global} version
that operates on vector fields. The local and global definitions are related by the following fact: 
if $(M, g)$ is locally $\varepsilon$-almost isometric to $(M, \tilde{g})$ via $\varphi$, 
then globally it is at most $(\varepsilon \cdot \text{Vol}_g(M))$-isometric.

\begin{definition}
  Given oriented and compact Riemannian manifolds $(M, g)$ and $(M, \tilde{g})$ and a local $\varepsilon$-almost isometry, $\varphi: M \to M$,
  we say that $\varphi$ is a \textit{global $E$-almost isometry} if there exists a continuous, compactly-supported scalar field, $E : M \to \mathbb{R}$, 
  such that for any normalized vector fields $X, Y \in \Gamma(TM)$, we have
  $$\left|(g - \varphi^*\tilde{g})(X, Y)\right| < E$$
  In particular, $\int_M E\ \omega_g \leq \varepsilon \cdot \text{Vol}_g(M)$,
  where $\omega_g$ is the canonical Riemannian volume form on $(M, g)$.
\end{definition}

It is known that equivariant model architectures are designed to preserve symmetries in data by imposing equivariance with respect to a Lie group action. 
Typical examples of Lie groups include the group of $n$-dimensional rotations, $SO(n)$, the group of area-preserving transformations, $SL(n)$, and the special unitary group,
$SU(n)$, which has applications to quantum computing and particle physics. 
Some of these Lie groups are, in fact, \textit{isometry groups} of the underlying manifolds from which data are sampled. 
\begin{definition}
  The {\it isometry group\/} of a Riemannian manifold, $M$, is the set of isometries 
  $\varphi: M \to M$ where the group operations of multiplication and inversion are given by function
  composition and function inversion, respectively. In particular, the composition of two isometries is 
  an isometry, and the inverse of an isometry is an isometry. We denote the isometry group of $M$ by $\text{Iso}(M)$ and the 
  set of $\varepsilon$-almost isometries of $M$ by $\text{Iso}_{\varepsilon}(M)$.
\end{definition}

To give some examples, $E(n) = \R^n \rtimes O(n)$ is the isometry group of $\R^n$, while the Poincaré group, $\R^{1,3} \rtimes O(1, 3)$,
is the isometry group of Minkowski space, which has important applications to special and general relativity. 
We often seek to impose equivariance in our models with respect to such isometry groups.
Isometry groups of Riemannian manifolds also satisfy the following deep theorem, due to Myers and Steenrod.

\begin{theorem}[\citet{myersSteenrod}]\label{theorem:myerssteenrod}
    The isometry group of a Riemannian manifold is a Lie group. 
\end{theorem}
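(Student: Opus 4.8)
The plan is to prove the Myers--Steenrod theorem, which asserts that $\mathrm{Iso}(M)$ for a Riemannian manifold $(M,g)$ carries the structure of a (finite-dimensional) Lie group acting smoothly on $M$. Since this is a classical and deep result, I would not attempt the full original argument but rather organize the proof around its three essential pillars. First I would equip $\mathrm{Iso}(M)$ with the compact-open topology and show that this agrees with the topology of pointwise convergence on isometries, exploiting the fact that an isometry is determined by its value and differential at a single point: if $\varphi(p) = \psi(p)$ and $d\varphi_p = d\psi_p$, then $\varphi = \psi$, because both maps send geodesics to geodesics and an isometry commutes with the exponential map via $\varphi \circ \exp_p = \exp_{\varphi(p)} \circ\, d\varphi_p$. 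This rigidity is the conceptual heart of the matter and reduces the infinite-dimensional-looking set of isometries to something controlled by the finite-dimensional data of a point together with a linear isometry of the tangent space.

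Next I would establish that $\mathrm{Iso}(M)$ is \emph{locally compact} by proving an equicontinuity and Arzel\`a--Ascoli argument: any sequence of isometries is $1$-Lipschitz with respect to the Riemannian distance, hence equicontinuous, so a bounded sequence admits a uniformly convergent subsequence whose limit is again an isometry (the limit preserves distances, and by the rigidity above it is smooth). The map $\varphi \mapsto (\varphi(p), d\varphi_p)$ embeds $\mathrm{Iso}(M)$ into the orthonormal frame bundle $F(M)$, whose fiber over $p$ is a copy of the compact group $O(n)$; local compactness of $\mathrm{Iso}(M)$ follows because its image is a closed subset of the locally compact space $F(M)$. Having a locally compact topological group acting effectively and transitively-enough on a manifold, I would then invoke the machinery that converts such topological data into smooth structure.

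The main obstacle, and the step I expect to be hardest, is promoting the \emph{topological} group structure to a genuine \emph{smooth} (Lie) group structure, i.e. ruling out pathological topological groups. The cleanest modern route is to verify the hypotheses of the solution to Hilbert's fifth problem (Montgomery--Zippin--Gleason): a locally compact topological group with no small subgroups is a Lie group. I would show $\mathrm{Iso}(M)$ has no small subgroups by using the frame-bundle embedding together with the differential rigidity to see that any sufficiently small subgroup must fix a frame and hence be trivial. Alternatively, and closer to the original Myers--Steenrod approach, one constructs charts directly: one shows the one-parameter subgroups correspond to Killing vector fields, that the space of Killing fields is finite-dimensional (bounded by $\tfrac{n(n+1)}{2}$), and that the exponential correspondence between Killing fields and isometries furnishes local coordinates. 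I would present the Hilbert-fifth-problem route as primary since it isolates precisely where the analytic difficulty lies, while noting the Killing-field construction as the explicit alternative that also yields the dimension bound and the smoothness of the action $\mathrm{Iso}(M) \times M \to M$.

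Finally, I would remark that since this theorem is cited verbatim from \citet{myersSteenrod} and used only as a black box in what follows, the appropriate move in the paper is to attribute it and sketch the pillars above rather than reproduce the technical estimates; the subsequent development needs only the conclusion that $\mathrm{Iso}(M)$ is a Lie group so that equivariance with respect to it falls under the Lie-group framework already set up.
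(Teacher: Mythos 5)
The paper offers no proof of this statement: it is quoted as a classical theorem and attributed to \citet{myersSteenrod}, then used purely as a black box (so that the standard Lie-theoretic machinery applies to $\text{Iso}(M)$). There is therefore no in-paper argument to compare against, and your closing remark --- that the right move is to attribute and sketch rather than reprove --- matches what the paper actually does. As a sketch of the classical proof your outline is essentially sound and hits the standard pillars: $1$-jet rigidity via $\varphi \circ \exp_p = \exp_{\varphi(p)} \circ\, d\varphi_p$, the embedding $\varphi \mapsto (\varphi(p), d\varphi_p)$ into the orthonormal frame bundle giving local compactness, and the passage from topological to Lie group either through the no-small-subgroups criterion of Montgomery--Zippin--Gleason or through the finite-dimensionality of the space of Killing fields. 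Two caveats worth flagging if you were to write this out. First, the rigidity statement requires $M$ connected (an isometry of a disconnected manifold is not determined by its $1$-jet at one point); the theorem is usually stated under that hypothesis. Second, your Arzel\`a--Ascoli step says the limit ``preserves distances, and by the rigidity above it is smooth'' --- but the rigidity you established does not give smoothness of a distance-preserving limit. That a metric isometry of a Riemannian manifold is automatically a smooth isometry is the \emph{other} Myers--Steenrod theorem, proved separately (geodesics are metrically characterized, so the map conjugates exponential maps and is smooth in normal coordinates); you are implicitly invoking it, and it should be named as its own ingredient rather than folded into the $1$-jet rigidity.
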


Thus, we can apply all the standard theorems of Lie theory to the study of isometry groups. 
Using basic facts, we can deduce the following result about equivariance.

\begin{remark}\label{proposition:abelianequivariance}
  If $f: M \to M$ is an isometry of the Riemannian manifold $(M, g)$ and $\text{Iso}(M)$ be abelian,
  then $\text{Iso}(M)$ acts smoothly on $M$ and $f$ is an equivariant map with respect to this action 
  of $\text{Iso}(M)$ on $M$. To see why, note that since $\text{Iso}(M)$ is abelian, we have by definition that
  $g \cdot f = f \cdot g$ for all $g \in \text{Iso}(M)$, which shows that $f$ is equivariant with respect to the 
  action of $\text{Iso}(M)$ on $M$.
\end{remark}

However, we cannot, without some work, consider this 
theorem in the context of $\text{Iso}_{\varepsilon}(M)$ because the set of $\varepsilon$-almost isometries of a manifold 
does not form a group. To see why, note that composing two $\varepsilon$-almost isometries produces, in general, a $2\varepsilon$-almost isometry, 
thus the set of $\varepsilon$-almost isometries of a manifold is not closed under composition. 
Still, we can impose the abelian condition on group actions as a stepping stone 
towards studying more general group actions, almost isometries, and equivariant functions. Under the assumption of an abelian Lie group acting on a Riemannian manifold,
we prove the following theorem.

\begin{theorem}
  Let $(M, g)$ be a Riemannian manifold and suppose its group of isometries, $G = \text{Iso}(M)$, is an abelian Lie group.
  Let $f \in \text{Iso}(M)$, and suppose there exists a continuous $\varepsilon$-almost isometry, $f_{\varepsilon} \in \text{Iso}_{\varepsilon}(M)$,
  with $f \neq f_{\varepsilon}$, such that 
  \[
      \sup_{p \in M} d(f(p), f_{\varepsilon}(p)) < \varepsilon
  \]
  where we abbreviate the above as $d(f, f_{\varepsilon})$ on $C^{\infty}(M, M)$ 
  and interpret it as an analogue to the supremum norm on the space of real-valued functions on $M$, i.e. $C^{\infty}(M)$. 
  Then $f_{\varepsilon}$ is $2\varepsilon$-almost equivariant with respect
  to the action of $G$ on $M$. That is, it satisfies 
  \[
      d(g \cdot f_{\varepsilon}(x), f_{\varepsilon}(g \cdot x)) < 2\varepsilon
  \]
  for any $g \in \text{Iso}(M)$ and any $x \in M$.
\end{theorem}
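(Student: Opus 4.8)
The plan is to derive the result from a single application of the triangle inequality, using the exact isometry $f$ as a bridge between $g \cdot f_\varepsilon(x)$ and $f_\varepsilon(g \cdot x)$. Two facts that make this work are already available. First, by Remark~\ref{proposition:abelianequivariance}, since $G = \text{Iso}(M)$ is abelian and $f \in \text{Iso}(M)$, the map $f$ is genuinely equivariant, so that $g \cdot f(x) = f(g \cdot x)$ for every $g \in G$ and $x \in M$. Second, every $g \in \text{Iso}(M)$ preserves the Riemannian distance $d$, i.e.\ $d(g \cdot a, g \cdot b) = d(a, b)$.

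First I would fix $g \in \text{Iso}(M)$ and $x \in M$ and insert the common value $q := g \cdot f(x) = f(g \cdot x)$ as an intermediate point, obtaining
\[
d(g \cdot f_\varepsilon(x),\, f_\varepsilon(g \cdot x)) \le d(g \cdot f_\varepsilon(x),\, g \cdot f(x)) + d(f(g \cdot x),\, f_\varepsilon(g \cdot x)),
\]
where I have already rewritten $q$ as $g \cdot f(x)$ in the first summand and as $f(g \cdot x)$ in the second, using the equivariance of $f$.

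Next I would bound each summand by $\varepsilon$. For the first, since $g$ acts as an isometry it preserves $d$, so $d(g \cdot f_\varepsilon(x), g \cdot f(x)) = d(f_\varepsilon(x), f(x))$, and the hypothesis $\sup_{p \in M} d(f(p), f_\varepsilon(p)) < \varepsilon$ applied at $p = x$ gives $d(f_\varepsilon(x), f(x)) < \varepsilon$. For the second, the same supremum bound applied at $p = g \cdot x$ gives $d(f(g \cdot x), f_\varepsilon(g \cdot x)) < \varepsilon$. Adding the two estimates yields $d(g \cdot f_\varepsilon(x), f_\varepsilon(g \cdot x)) < 2\varepsilon$, as claimed; since $g$ and $x$ were arbitrary, the bound holds uniformly.

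There is no serious obstacle here: the only conceptual step is recognizing that the abelian hypothesis is precisely what upgrades the isometry $f$ to an equivariant map, after which the estimate is purely the triangle inequality combined with distance-preservation by $G$. It is worth noting that the argument never uses that $f_\varepsilon$ is itself an almost isometry — only the uniform sup-distance bound between $f$ and $f_\varepsilon$ is needed — and that strict inequality is preserved because each of the two terms is individually strictly below $\varepsilon$.
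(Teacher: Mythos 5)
Your proof is correct and follows essentially the same route as the paper's: equivariance of $f$ from the abelian hypothesis (the Remark), distance preservation by $g$, the sup-norm hypothesis applied at $x$ and at $g\cdot x$, and the triangle inequality; you merely collapse the paper's two-stage triangle inequality into a single one by absorbing the zero term $d(g\cdot f(x), f(g\cdot x))=0$ directly. Your closing observation is also accurate and slightly sharper than the paper's own wording: the bound $d(f(g\cdot x), f_\varepsilon(g\cdot x))<\varepsilon$ comes from the sup-distance hypothesis, not from $f_\varepsilon$ being an $\varepsilon$-almost isometry, so that hypothesis is in fact never used.
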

\begin{proof}
  By Proposition~\ref{proposition:abelianequivariance}, since $\text{Iso}(M)$ is abelian, any $f \in \text{Iso}(M)$ is equivariant to actions of 
  $\text{Iso}(M)$, i.e. we have
  \[
      g \cdot f(x) = f(g \cdot x)
  \]
  for all $x \in M$. Equivalently, we have $d(g \cdot f(x), f(g \cdot x)) = 0$. Now, $d(f(x), f_{\varepsilon}(x)) < \varepsilon$ by definition 
  of the supremum norm. Then, we have $d(f(g \cdot x), f_{\varepsilon}(g \cdot x)) < \varepsilon$ simply by definition of a $\varepsilon$-almost isometry. Since $g$ is an isometry, it preserves distances, so we have $d(g \cdot f(x), g \cdot f_{\varepsilon}(x)) < \varepsilon$ because
  $d(f(x), f_{\varepsilon}(x)) < \varepsilon$.
   
  Using the fact that $f$ is equivariant to actions of $g \in \text{Iso}(M)$ and applying the inequalities just derived, along with repeated applications of the triangle inequality, we get
  \begin{align}
      d(g \cdot f(x), f_{\varepsilon}(g \cdot x)) &< d(g \cdot f(x), f(g \cdot x)) + \\
      d(f(g \cdot x), f_{\varepsilon}(g \cdot x)) \\
      &< 0 + \varepsilon = \varepsilon \\
      d(g \cdot f_{\varepsilon}(x), f_{\varepsilon}(g \cdot x)) &< d( g \cdot f_{\varepsilon}(x), g \cdot f(x)) + \\
      d(g \cdot f(x), f_{\varepsilon}(g \cdot x)) \\
      &< \varepsilon + \varepsilon = 2\varepsilon
  \end{align}
  We apply the triangle inequality to get (1) and (4), and we substitute the inequalities derived above to get (3) and (6).
  Thus, $d(g \cdot f_{\varepsilon}(x), f_{\varepsilon}(g \cdot x)) < 2\varepsilon$, which shows that $f_{\varepsilon}$ is 
  $2\varepsilon$-almost equivariant with respect to the action of $\text{Iso}(M)$. This completes the proof.
\end{proof}

Of course, this theorem is not particularly useful unless for every isometry, $f \in \text{Iso}(M)$, 
we have a way of obtaining an $\varepsilon$-almost isometry, $f_{\varepsilon} \in \text{Iso}_{\varepsilon}(M)$,
satisfying
\[
  d(f, f_{\varepsilon}) < \varepsilon
\]
The next theorem shows that such $f_{\varepsilon}$ are plentiful. In fact, there are infinitely many of them.
Furthermore, not only can we find $f_{\varepsilon} : M \to M$, but we can find an isometric embedding, $\varphi : M \to \R^n$,
of $f$ that is $G$-equivariant and then construct $f_{\varepsilon} : M \to \R^n$ as an $\varepsilon$-almost isometric 
embedding of $M$ into $\R^n$ such that
\[
  \| \varphi(f) - f_{\varepsilon} \|_{\infty} < \varepsilon
\] 
This is particularly useful in the context of machine learning, where we normally appeal to 
embedding abstract manifolds into some discretized subspace of $\R^n$ in order to actually perform computations on a finite-precision computer.
We then later give some conditions under which we can achieve the converse, that is, given an $\varepsilon$-almost isometry, 
$f_{\varepsilon}$, of a metric space, $X$, find an isometry, $f$ of $X$, such that 
\[
  \sup_{x \in X} d(f(x), f_{\varepsilon}(x)) < c \cdot \varepsilon
\] 
for some constant $c \in \R$.

\begin{restatable}{lemma}{existencealmostequiv}\label{proposition:existencealmostequiv}
  Let $(M, g)$ be a compact Riemannian manifold without boundary, $G$
  a compact Lie group acting on $M$ by isometries, $f: M \to M$ a $G$-equivariant function, 
  and $\varepsilon > 0$. Then there exists an orthogonal representation $\rho$ of $G$,
  i.e. a Lie group homomorphism from $G$ into the orthogonal group $O(N)$ which acts on $\R^n$ by rotations and reflections,
  an isometric embedding $\varphi: M \to \mathbb{R}^N$, and an $\varepsilon$-almost isometric embedding, $f_{\varepsilon}: M \to \mathbb{R}^N$, 
  such that $\varphi$ is equivariant with respect to $\rho$, i.e.
  \[
     \rho(g) \cdot \varphi(f(x)) = \varphi(g \cdot f(x)), \quad \text{for } g \in G
  \]
  and $f_{\varepsilon}$ is $\varepsilon$-almost isometric with respect to $\varphi(f)$, i.e. it satisfies
  \begin{equation*}
      \| \varphi(f) - f_{\varepsilon} \|_{\infty} < \varepsilon
  \end{equation*}
\end{restatable}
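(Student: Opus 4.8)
The plan is to produce $\varphi$ and $\rho$ in a single step via an equivariant isometric embedding theorem, read off the required equivariance identity almost for free, and then manufacture $f_\varepsilon$ as a small $C^1$-perturbation of $\varphi \circ f$. The compactness of both $M$ and $G$ is what makes everything go through: it supplies the embedding theorem and turns smallness of a perturbation parameter into uniform control of the pullback metric.

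First I would invoke the equivariant version of the Nash isometric embedding theorem: since $M$ is compact and $G$ is a compact Lie group acting on $(M,g)$ by isometries, there exist an integer $N$, an orthogonal representation $\rho: G \to O(N)$, and a $G$-equivariant isometric embedding $\varphi: M \to \R^N$ with $\varphi(g \cdot x) = \rho(g)\varphi(x)$ for all $g \in G$ and $x \in M$. As intuition for why equivariance is compatible with the embedding, note that the $G$-action commutes with the Laplace--Beltrami operator, so each eigenspace is a finite-dimensional orthogonal $G$-representation and an eigenfunction embedding is automatically $\rho$-equivariant; the substantive input is upgrading such a map to an exactly isometric one. Applying the equivariance of $\varphi$ at the point $f(x)$ yields $\rho(g)\cdot \varphi(f(x)) = \varphi(g \cdot f(x))$, which is the first claimed identity; observe that this step uses only the equivariance of $\varphi$, not that of $f$.

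Next I would note that $\varphi \circ f$ is itself an isometric embedding, since $(\varphi \circ f)^* g_{\mathrm{eucl}} = f^*\varphi^* g_{\mathrm{eucl}} = f^* g = g$, using that $f$ is an isometry and $\varphi$ is isometric. To build $f_\varepsilon$, fix any nonzero smooth map $\eta: M \to \R^N$ and set $f_\varepsilon = \varphi \circ f + t\,\eta$ for a small parameter $t > 0$. The sup-norm bound is immediate, since $\|\varphi(f) - f_\varepsilon\|_\infty = t\,\|\eta\|_\infty < \varepsilon$ once $t$ is small. For the almost-isometry condition, expanding the pullback metric gives
\[
(f_\varepsilon^* g_{\mathrm{eucl}})_p(v,w) - g_p(v,w) = 2t\,\langle d(\varphi\circ f)_p(v), d\eta_p(w)\rangle + t^2\,\langle d\eta_p(v), d\eta_p(w)\rangle,
\]
and by compactness the derivatives of $\varphi\circ f$ and $\eta$ are uniformly bounded, so on unit vectors $v,w$ the right-hand side is $O(t)$ uniformly in $p$. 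Choosing $t$ small enough makes $|(g - f_\varepsilon^* g_{\mathrm{eucl}})_p(v,w)| < \varepsilon$, i.e. $f_\varepsilon$ is an $\varepsilon$-almost isometric embedding; for $t$ small it remains an embedding, since embeddings of a compact manifold form a $C^1$-open set. Letting $t$ range over a small interval yields infinitely many distinct such $f_\varepsilon$, which establishes the ``plentiful'' remark preceding the statement.

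The main obstacle is the equivariant embedding step. Ordinary Nash embedding is routine to cite, but producing a map that is \textbf{simultaneously} isometric and intertwines the $G$-action with an orthogonal representation is the real work, and care is needed to verify that the construction can be truncated to a finite $N$ while remaining exactly isometric rather than merely almost isometric. Once $\varphi$ and $\rho$ are in hand, the perturbation argument is elementary, relying only on compactness to convert smallness in $t$ into uniform control of the pullback metric and on $C^1$-openness of embeddings to keep $f_\varepsilon$ an embedding.
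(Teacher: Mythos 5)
Your proposal rests on the same key input as the paper's proof: both invoke an equivariant version of the Nash isometric embedding theorem (the paper cites it as its ``main theorem of \citet{equivariantnash}'') to obtain $\rho$ and $\varphi$ simultaneously, and both then read off the identity $\rho(g)\cdot\varphi(f(x)) = \varphi(g\cdot f(x))$ directly from the equivariance of $\varphi$. Where you diverge is in the construction of $f_{\varepsilon}$, and your route is genuinely different and, frankly, more careful. The paper takes $f_{\varepsilon} = \varphi\circ f$ everywhere except at a single point $x_0$, where it reassigns the value to some $y \in B_{\varepsilon}(\varphi(f(x_0)))$; this gives the sup-norm bound trivially, but the resulting map is discontinuous, so it cannot be an almost isometric \emph{embedding} in the pullback-metric sense the paper itself uses to define $\varepsilon$-almost isometries (there is no differential to pull the Euclidean metric back through), and the closing remark that such maps ``can be taken to be continuous'' is asserted rather than shown. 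Your construction $f_{\varepsilon} = \varphi\circ f + t\,\eta$ avoids this entirely: the sup-norm bound is $t\|\eta\|_{\infty}$, the first-order expansion of $f_{\varepsilon}^{*}g_{\mathrm{eucl}} - g$ together with compactness gives uniform $O(t)$ control of the metric distortion on unit vectors, and $C^1$-openness of embeddings of a compact manifold keeps $f_{\varepsilon}$ an embedding. Letting $t$ vary also recovers the ``infinitely many'' claim honestly. In short, you buy an actually smooth, actually almost-isometric $f_{\varepsilon}$ at the cost of a short perturbation computation that the paper skips; the only shared caveat is that both arguments lean entirely on the equivariant Nash theorem for the hard existence step, which neither proof reproves.
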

\begin{proof}
  Under the stated assumptions of $M$ a compact Riemannian manifold and $G$ a compact Lie group 
  acting on $M$ by isometries, we can get the existence of $\rho$ and $\varphi$ by invoking the 
  main theorem of \citet{equivariantnash}. From there, note that setting $f_{\varepsilon} = \varphi(f)$ 
  trivially satisfies (1) for any $\varepsilon$, although we seek a non-trivial solution.
  We can choose an arbitrary $x_0 \in M$, and define $f_{\varepsilon}(x) = \varphi(f(x))$
  for all $x \neq x_0 \in M$. Next, since $M$ is compact, $\varphi(f)$
  is bounded on $M$. We can then take a neighborhood $U$ of $x_0$ such that 
  $B_{\varepsilon}(\varphi(f(x_0))) \subseteq \varphi(U)$. We can then choose an arbitrary $y \in B_{\varepsilon}(\varphi(f(x_0)))$,
  while requiring $y \neq \varphi(f(x_0))$, and set $f_\varepsilon(x_0) = y$. Then $f_{\varepsilon}$ is 
  an $\varepsilon$-almost isometric embedding of $M$, but $f_{\varepsilon} \neq f$, as desired.
  Furthermore, given a suitable topology on $C^{\infty}(M)$ (such as the compact-open topology),
  $B_{\varepsilon}(\varphi(f(x_0)))$ is open so that there exist infinitely many such $f_{\varepsilon} \neq f$,
  and they can be taken to be continuous.
\end{proof}

We've now shown that, subject to restrictions on $G$,
given a $G$-equivariant isometry of $M$, $f$, we can find $\varepsilon$-isometries of $M$,
$f_{\varepsilon}$, within distance $\varepsilon$ to $f$ that are, in fact, $2\varepsilon$-almost equivariant with respect
to the $G$-action on $M$.
The next, more difficult question (Theorem~\ref{theorem:ulamstability}) concerns a partial converse.
That is, given an $\varepsilon$-almost isometry, $f_{\varepsilon}$, can we find an isometry, $f$, that
differs from $f_{\varepsilon}$ by no more than some constant multiple of $\varepsilon$, for all inputs $x$? The answer here 
is, {\bf yes}, but proving it takes some work. We address this question in the next section. 

\begin{figure}[t]
  \centering
  \includegraphics[width=.5\textwidth]{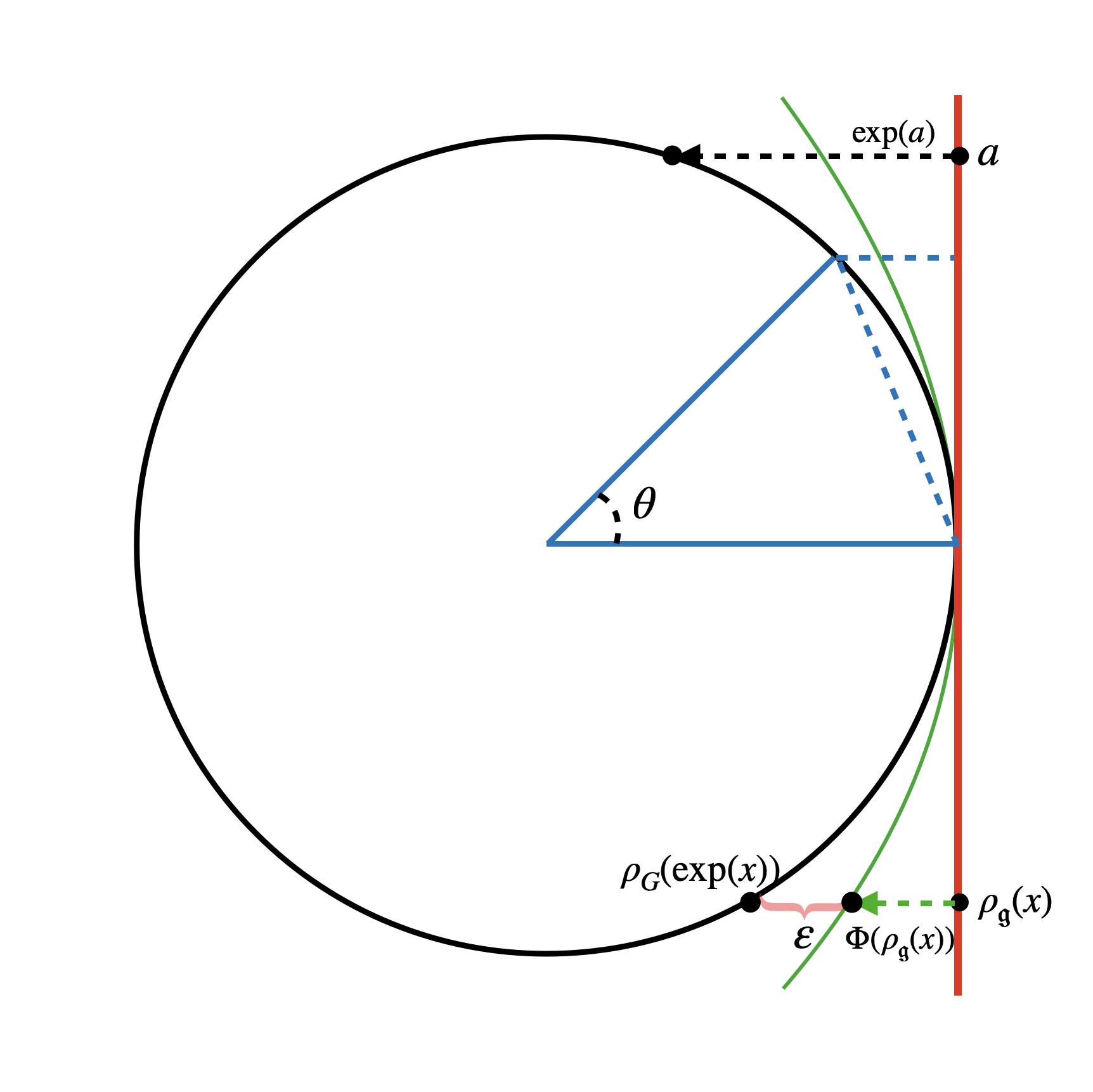}
  \caption{We provide a visualization of how actions of the Lie algebra can be used 
  to approximate actions of the corresponding Lie group. The Lie group, $SO(2)$, of 
  two-dimensional rotations, represented here as the circle, $S^1 \subset \mathbb{C}^2$, with
  its Lie algebra, $\mathfrak{so}(2)$, represented here as the tangent line at the 
  identity $x = 1 \in \mathbb{C}$, is the most easily visualized case. Here, 
  $\theta$ gives the angle of rotation, $\varepsilon$ gives the approximation
  error arising from working in the Lie algebra, and the top dashed arrow shows how points 
  can be mapped from the Lie algebra onto the Lie group via the exponential map.
  The function $\Phi: \mathfrak{g} \to  G$, indicated here by the green curve,
  is a learned mapping that can be trained to approximate the exponential map.}
  \label{fig:intuition}
\end{figure}

\subsection{Ulam Stability Theory}
There exist a number of results in the mathematics literature that confirm the 
existence of almost isometries that are ``close'' to isometries in the sense that the 
metric space distance between them can be bounded by some $\varepsilon$. For example,
a theorem, due to Hyers and Ulam, states the following
\begin{restatable}{theorem}{ulamstability}{\normalfont (\citet{Hyers_Ulam_1945})}\label{theorem:ulamstability}
  Let $E$ be a complete real Hilbert space. Let $\varepsilon > 0$ and $T$ be a surjection
  of E into itself that is an $\varepsilon$-isometry, that is,
  $|\rho(T(x), T(y)) - \rho(x, y)| < \varepsilon$, for all $x, y \in E$,
  where $\rho$ denotes the inner product in $E$. Assume that $T(0) = 0$. Then the limit 
  \[
      I(x)= \lim_{n \to \infty} \frac{T(2^n x)}{2^n}
  \]
  exists for every $x \in E$ and the transformation $I$ is a surjective isometry of $E$
  into itself, which satisfies $\|T(x) - I(x)\| < 10 \varepsilon,\ \forall x \in E$.
\end{restatable}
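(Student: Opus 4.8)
The plan is to build the isometry $I$ directly as the stated dyadic limit and then verify, in order, that the limit exists, that it is a surjective linear isometry, and finally that it stays within $10\varepsilon$ of $T$. I read the $\varepsilon$-isometry hypothesis in its metric form, $\bigl|\,\|T(x)-T(y)\|-\|x-y\|\,\bigr|<\varepsilon$, and I use the normalization $T(0)=0$: setting $y=0$ gives at once the approximate norm identity $\bigl|\,\|T(x)\|-\|x\|\,\bigr|<\varepsilon$ for every $x\in E$.

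The engine of the argument is that in a Hilbert space an $\varepsilon$-isometry approximately preserves the inner product. By polarization, $\langle T(x),T(y)\rangle=\tfrac12\bigl(\|T(x)\|^2+\|T(y)\|^2-\|T(x)-T(y)\|^2\bigr)$, and substituting the three near-equalities above shows $\bigl|\langle T(x),T(y)\rangle-\langle x,y\rangle\bigr|$ is bounded by a multiple of $\varepsilon(\|x\|+\|y\|)+\varepsilon^2$. Writing $f_n(x)=2^{-n}T(2^n x)$ and expanding $\|f_n(x)-f_m(x)\|^2$ through these inner products, the leading $\|x\|^2$ terms cancel and the error is controlled by $\varepsilon\|x\|(2^{-n}+2^{-m})$, so $(f_n(x))_n$ is Cauchy; completeness of $E$ yields $I(x)=\lim_n f_n(x)$. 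Passing the inner-product estimate to the limit gives $\langle I(x),I(y)\rangle=\langle x,y\rangle$ exactly, and expanding $\|I(x+y)-I(x)-I(y)\|^2$ and $\|I(\lambda x)-\lambda I(x)\|^2$ in these preserved inner products shows both vanish, so $I$ is linear.

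That $I$ is an isometry is the easy step: dividing the hypothesis $\bigl|\,\|T(2^n x)-T(2^n y)\|-2^n\|x-y\|\,\bigr|<\varepsilon$ by $2^n$ and letting $n\to\infty$ gives $\|I(x)-I(y)\|=\|x-y\|$. For surjectivity I would use the bound established below: $I(E)$ is a closed subspace, and if it were proper, choosing a unit vector $w\perp I(E)$ and, by surjectivity of $T$, an $x$ with $T(x)=Rw$ for $R>10\varepsilon$ forces $\operatorname{dist}(T(x),I(E))=R>10\varepsilon$, contradicting $\|T(x)-I(x)\|<10\varepsilon$ with $I(x)\in I(E)$.

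The main obstacle is the uniform estimate $\|T(x)-I(x)\|<10\varepsilon$ itself. The naive route --- telescoping $T(x)-I(x)=\sum_{n\ge0}\bigl(f_n(x)-f_{n+1}(x)\bigr)$ and summing the per-term bounds coming from the midpoint/Apollonius identity $\|T(x)-\tfrac12T(2x)\|^2=\tfrac12\|T(x)\|^2+\tfrac12\|T(x)-T(2x)\|^2-\tfrac14\|T(2x)\|^2$ --- only produces an $x$-dependent bound of order $\sqrt{\varepsilon\|x\|}$, since the inner-product error grows linearly in $\|x\|$. Extracting a constant multiple of $\varepsilon$ that is \emph{uniform} in $x$ is precisely the delicate content of Hyers and Ulam: one must track the orthogonal components of $T(x)-I(x)$ at every dyadic scale and show the accumulated deviation collapses to a scale-free bound, which is where the explicit (non-optimal) constant $10$ emerges. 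Everything else --- existence of the limit, linearity, the exact isometry property, and surjectivity --- is routine once the Hilbert-space inner-product estimate is in hand.
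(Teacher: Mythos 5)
Everything you actually carry out is correct, and your route to the limit is a mild but genuine variant of the paper's. The paper (following Hyers--Ulam) bounds $\|T(x)-T(2x)/2\|$ by intersecting two spheres and expanding $2\|y-y_0\|^2$ via the parallelogram law; you get the same per-scale estimate of order $\sqrt{\varepsilon\|x\|}+\varepsilon$ from the polarization identity and approximate preservation of inner products. The two computations are essentially the same Hilbert-space identity dressed differently, and both give the Cauchy property of $2^{-n}T(2^n x)$ and hence existence of $I$. Your version buys a little extra: exact preservation of $\langle\cdot,\cdot\rangle$ in the limit and hence linearity of $I$, which the paper's reproduction never records. The isometry property of $I$ is handled identically in both.

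The genuine gap is the uniform estimate $\|T(x)-I(x)\|<10\varepsilon$, which is the quantitative heart of the theorem, and you do not prove it --- you prove that you cannot prove it by your telescoping, which only yields a bound of the form $a\sqrt{\varepsilon\|x\|}+4\varepsilon$ that degrades as $\|x\|\to\infty$, and you then gesture at ``tracking the orthogonal components at every dyadic scale.'' That sentence is a placeholder, not an argument: the scale-free constant in Hyers--Ulam comes from a separate geometric step that uses the surjectivity of $T$ in an essential way (an approximate-midpoint/equidistant-hyperplane lemma for surjective $\varepsilon$-isometries), and nothing in your proposal reconstructs it. Since your surjectivity argument for $I$ is explicitly conditional on the $10\varepsilon$ bound, that conclusion is left hanging as well. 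In fairness, the proof reproduced in the paper's appendix stops at exactly the same point --- it establishes the limit and the isometry property and then declares the proof complete, without deriving either the $10\varepsilon$ bound or surjectivity --- so your account is no less complete than the paper's; but measured against the statement being proved, the omission is real in both, and in yours it is the one step you were not entitled to skip.
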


This proposition demonstrates, given any $\varepsilon > 0$ and $\varepsilon$-almost isometry, $f_{\varepsilon}$, 
the existence of an isometry, $f$, whose distance from $f_{\varepsilon}$ is at most $10 \varepsilon$, for any input $x$. 

This result spurred subsequent research, and a later bound due 
to Fickett tightened the inequality. We state his theorem here as well.
\begin{theorem}[\citet{Fickett1982}]\label{theorem:fickett}
    For a fixed integer $n \geq 2$, let $D$ be a bounded subset of $\R^n$ and let $\varepsilon > 0$
    be given. If a function $f: D \to \R^n$ satisfies
    \[
        \left| \|f(x) - f(y)\| - \|x - y\| \right| \leq \varepsilon
    \]
    for all $x, y \in D$, that is, $f$ is an $\varepsilon$-isometry of $D$, then there exists 
    an isometry $U : D \to \R^n$ such that
    \[
        \|f(x) - U(x)\| \leq 27\varepsilon^{1/2^n}
    \]
\end{theorem}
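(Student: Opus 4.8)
The plan is to induct on the dimension $n$, exploiting the rigidity of exact Euclidean isometries: every isometry $U:\R^n\to\R^n$ is an affine map $U(x)=Qx+b$ with $Q\in O(n)$, and it is completely determined by its action on any $n+1$ affinely independent points. Since an $\varepsilon$-isometry $f$ preserves all pairwise distances to within $\varepsilon$, it approximately preserves the Gram matrix, and hence the affine geometry, of every finite configuration in $D$. First I would fix a reference configuration $p_0,\dots,p_n\in D$ spanning the affine hull of $D$ (working inside that hull handles degenerate, lower-dimensional $D$ by reducing $n$), and define the candidate isometry $U$ to be the exact rigid motion that best aligns the $p_i$ with their images $f(p_i)$ in the least-squares sense. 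Because the distances $\|p_i-p_j\|$ are reproduced by $\|f(p_i)-f(p_j)\|$ up to $\varepsilon$, the configuration $\{f(p_i)\}$ is approximately congruent to $\{p_i\}$, so such a $U$ exists and satisfies $\|U(p_i)-f(p_i)\|=O(\varepsilon)$. It then remains to propagate this control from the reference points to an arbitrary $x\in D$.

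The heart of the argument is the stability of trilateration: a point $x\in\R^n$ is pinned down by its distances to the $p_i$, and $f(x)$ has nearly the same distances to the $f(p_i)\approx U(p_i)$, so $f(x)$ should sit near $U(x)$. The quantitative loss in this step is governed by how non-degenerate the reference simplex is, and here the mere boundedness of $D$ is the difficulty: $D$ may be concentrated near a hyperplane, so no well-conditioned simplex need exist. This is exactly where the induction enters. I would split coordinates into the $(n-1)$ directions in which $D$ is ``wide'' and the single transverse direction in which it is ``thin,'' apply the dimension-$(n-1)$ bound in the wide hyperplane to control $f$ there up to $\delta\sim\varepsilon^{1/2^{n-1}}$, and then recover the transverse coordinate $t$ from the Pythagorean identity $t^2=\mathrm{dist}(x,p)^2-\|x'-p'\|^2$. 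Since the right-hand side is known only up to $O(\delta)$ while $t$ enters through its square, taking the square root costs a factor: the transverse error is $O(\sqrt{\delta})=O(\varepsilon^{1/2^n})$, which is precisely the doubling of the denominator in the exponent $1/2^n$. Tracking the multiplicative constants through the $n$ inductive steps, with base case $n=1$ where the isometries of $\R$ are $x\mapsto\pm x+c$ and the bound is linear in $\varepsilon$, is what produces the explicit factor $27$.

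The main obstacle I anticipate is making the ``wide versus thin'' decomposition rigorous and uniform. One must (i) choose the transverse direction so that the transverse extent of $D$ is provably small enough for the square-root estimate to dominate, yet the projected set genuinely realizes the dimension-$(n-1)$ hypothesis; (ii) resolve the sign and reflection ambiguity of the transverse coordinate consistently across all of $D$, so that a single $Q\in O(n)$ works simultaneously rather than folding the thin direction; and (iii) carry the projection bookkeeping, since projecting the images onto the target hyperplane preserves distances only approximately, and these errors must be shown to compound no faster than a square root per dimension. A secondary, more routine point is an initial normalization reducing to the regime where $\varepsilon$ is small and $D$ has controlled diameter, outside of which the claimed bound is trivial.
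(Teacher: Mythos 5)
The paper does not actually prove this statement: it is imported directly from Fickett (1982), and the only proof reproduced in the appendix is that of the Hyers--Ulam result (Theorem 3.11), not this one. So there is no internal argument to compare against, and your proposal must stand on its own. To its credit, it correctly identifies the mechanism behind the exponent $1/2^n$ --- one square-root loss per dimension, incurred whenever a coordinate must be recovered from squared distances in a direction where $D$ is thin --- and this is indeed the spirit of Fickett's induction.

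But as written there is a genuine gap at the first quantitative step, and it propagates. You assert that the optimal rigid alignment $U$ of a reference simplex $p_0,\dots,p_n$ with its image satisfies $\|U(p_i)-f(p_i)\|=O(\varepsilon)$ because the pairwise distances agree to within $\varepsilon$. That is false without a lower bound on the conditioning of the simplex: if a vertex sits at height $h$ above the affine span of the others, that height is determined by squared distances, so an $\varepsilon$-perturbation of the distances can displace the vertex by roughly $(\varepsilon\,\mathrm{diam}\,D)^{1/2}$ once $h \lesssim (\varepsilon\,\mathrm{diam}\,D)^{1/2}$ --- and boundedness of $D$ supplies no such lower bound, as you yourself concede two sentences later. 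What the argument actually needs, and what the sketch never supplies, is a quantitative dichotomy with an explicit threshold $\tau_n$: either $D$ contains a simplex of minimal height at least $\tau_n$, in which case trilateration gives an error linear in $\varepsilon/\tau_n$, or $D$ lies within $\tau_n$ of a hyperplane, in which case one projects, applies the $(n-1)$-dimensional statement to the projected map (which is now only a $C\tau_n$-isometry, not an $\varepsilon$-isometry, so the inductive hypothesis is invoked at a larger scale), and pays a square root for the transverse coordinate; choosing $\tau_n$ to balance the two branches is precisely what yields $\varepsilon^{1/2^n}$, and tracking it through $n$ levels is what yields $27$. None of that is carried out. Two further points: the ``folding'' issue you flag in (ii) is a real obstruction and needs an actual argument (e.g.\ that an $\varepsilon$-isometry cannot reverse the transverse sign on points separated by much more than $\tau_n$); and the normalization you call routine is not --- the bound $27\varepsilon^{1/2^n}$ is not scale-invariant (rescaling $D$ by $\lambda$ sends $\varepsilon$ to $\lambda\varepsilon$ but $\varepsilon^{1/2^n}$ to $\lambda^{1/2^n}\varepsilon^{1/2^n}$), so the statement implicitly carries a hypothesis such as $\mathrm{diam}\,D\le 1$, as in Fickett's original, which the paper's phrasing omits and which your reduction must make explicit.
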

Taken together, these two results show that no matter what $\varepsilon$-almost isometry
we define, it is never ``far off'' from a full isometry, with the 
distance between the two bounded above by $27 \varepsilon^{1/2^n}$. Most recently,
\citet{vaisala} proved an even tighter bound, but its discussion is beyond the scope of this paper.

To apply Theorem~\ref{theorem:ulamstability} 
in the context of machine learning,
note that by the Nash Embedding Theorem \citep{nashembeddingthm}, we can smoothly and isometrically embed any 
Riemannian manifold $(M, g)$ into $\R^n$ for some $n$. If $M$ is compact, then the embedding of $M$ in $\R^n$
will be a compact, and therefore bounded, subset of $\R^n$. We can then apply 
Theorem~\ref{theorem:ulamstability} to any
$\varepsilon$-isometry of $M$ to get a nearby isometry of $M$ as a subset of $\R^n$. 

If $M$ is not compact, let $S \subseteq \R^n$ be its smooth isometric embedding. 
We can then apply Theorem 11.4 of \citet{wells1975}, 
which states that for a finite-dimensional Hilbert space, $H$, we can extend any  
isometry of $S$ to an isometry on the linear span of $S$. Assuming the completion, $\bar{S}$, of $S$
is contained in the linear span of $S$, we can then, for any surjective $\varepsilon$-isometry of $\bar{S}$ 
into itself, apply Theorem~\ref{theorem:ulamstability} to recover an isometry of $\bar{S}$.

\section{Method}
\subsection{Almost Equivariant Models}
Having established the theory, we now give a practical method for encoding almost equivariance in machine learning models by appealing to the Lie algebra,
$\mathfrak{g}$, of the Lie group, $G$.

\begin{definition}
  Given a connected Lie group, $G$, its Lie algebra, $\mathfrak{g}$, vector spaces, $V$ and $W$, and
  representations, $\rho_G : G \to GL(V)$ and $\rho_\mathfrak{g} : \mathfrak{g} \to \mathfrak{gl}(W)$,
  we say a model $f: V \to W$ is {\it $\varepsilon$-almost equivariant\/}
  with respect to the action of a Lie group, $G$, if 
  \[
      \|f(\rho_G(g)v) - \Phi(\rho_\mathfrak{g}(x))f(v)\| \leq \varepsilon
  \]
  for $g \in G$, $x \in \mathfrak{g}$, $v \in V$, and some $\Phi: \mathfrak{gl}(W) \to GL(W)$.
\end{definition}
Note that our definition naturally encompasses full equivariance with respect to the action of 
connected, compact Lie groups, for which the $\exp$ map is surjective, and which occurs when we 
take $\varepsilon = 0$ and define $\Phi := \exp$.


Our definition makes clear the correspondence between $G$ and the linear approximation at the identity, $e \in G$, afforded by the Lie 
algebra, $\mathfrak{g}$. Because $\rho_G(g)$ acts by $g$ on $v \in V$, we expect that there exists an element $x \in \mathfrak{g}$ such 
that the action of $\Phi(\rho_\mathfrak{g}(x))$ on $f(v) \in W$ approximates the action of some representation of $g$ on $f(v)$. 
We givea visualization of the intuition behind the definition in Figure~\ref{fig:intuition} for the case where $G = SO_2(\mathbb{C}) = S^1 \subset \mathbb{C}$.

\begin{table*}[]
\centering
\resizebox{\textwidth}{!}{%
\begin{tabular}{@{}cccccc@{}}
\toprule
Group &
Num Samples &
Model &
\begin{tabular}[c]{@{}c@{}}Rot-MNIST \\ Classification Accuracy\end{tabular} &
\begin{tabular}[c]{@{}c@{}}Pendulum \\ Regression Error (RMSE)\end{tabular} &
\begin{tabular}[c]{@{}c@{}}Pendulum \\ Average RMSE\end{tabular} \\ \midrule
SE(2) &
10 &
Almost Equivariant G-CNN &
$\mathbf{92.05 \pm 0.27}$ &
$\mathbf{0.0363 \pm 0.0004}$ &
$\mathbf{0.5571 \pm 2.1730}$ \\
\multirow{3}{*}{E(2)} &
\multirow{2}{*}{10} &
E2CNN &
{\bf \color{gray} $91.91 \pm 0.22$} &
{\bf \color{gray}$0.0349 \pm 0.0001$} &
{\bf \color{gray}$3.5987 \pm 2.8203$} \\
&
 &
Residual Pathway Prior &
$85.20 \pm 0.66$&
$0.0350 \pm 0.0001$ &
$14.4018 \pm 26.8171$ \\
&
N/A &
Approximately Equivariant G-CNN &
$84.99 \pm 0.37$ &
$0.1349 \pm 0.1414$ &
$1.4893 \pm 1.8695$ \\
T(2) &
N/A &
Standard CNN &
$85.95 \pm 0.49$ &
$0.0354 \pm 0.0009$ &
$0.6573 \pm 1.0565$ \\ \bottomrule
\end{tabular}%
}
\caption{Rot-MNIST classification accuracies and RMSE prediction errors for pendulum trajectory prediction. 
Best results are bold-faced and second-best are colored gray.}
\label{tab:rot-results}
\end{table*}

\subsection{Lie Algebra Convolutions}
We build an almost equivariant neural network layer based on the Lie algebra, $\mathfrak{g}$, of a 
matrix Lie group, $G \subseteq GL_n(\mathbb{R})$.
To parameterize our kernel function, we encode the Lie
algebra basis explicitly. For most matrix Lie groups, the corresponding Lie algebra basis
has an easily calculated set of generators, i.e.\ a set of basis elements $\{x_i\}$. 
Second, instead of mapping elements of $\mathfrak{g}$ directly to $G$ via the exponential
map, we train a neural network, $\mathcal{N}_{\theta}: \mathfrak{g} \to \mathbb{R}^{n \times n}$,
to learn an approximation to this mapping directly from data. In our experiments, each
Lie algebra convolutional layer of the network is equipped with its own $\mathcal{N}$, which 
is parameterized as an MLP with a single linear layer followed by a non-linear activation,
either a ReLU or a sigmoid function. Our method confers some key benefits over previous approaches. 
For one, the kernels used in some past works are still constrained to take as input only group elements, 
$u, x \in G$, which to some extent limits the flexibility with which they 
can model partial equivariances. In contrast, our kernel can take any 
$u, x \in \mathbb{R}^{n \times n}$ as an input, allowing us to model a more flexible class 
of functions while still maintaining the interpretability achieved by parameterizing
this function class via elements of the Lie algebra.

\begin{definition}
  We construct an {\it almost equivariant Lie algebra convolution\/}, abbreviated $\mathfrak{g}$-conv, by letting 
  $u, x = \sum_{i=1}^{\dim \mathfrak{g}}c_i x_i \in \mathfrak{g}$ and defining
  \[ 
      (k_{\omega} \star f)(u) = \int_{x \in \mathfrak{g}}
      k_{\omega}\left(\mathcal{N}_{\theta}{(x)}^{-1}\exp(u)\right)f(x)d\mu(x)
  \]
\end{definition}
Here, instead of integrating with respect to the Haar measure, we instead integrate with 
respect to the Lebesgue measure, $\mu$, defined on $\mathbb{R}^{n \times n}$. This is possible because
we are integrating over the Lie algebra, $\mathfrak{g}$, which is a vector subspace of 
$\mathbb{R}^{n \times n}$. Existing works require integrating with respect to the Haar measure 
because it is finite for compact groups, which allows one to more easily do MCMC sampling. 
Compactness is also necessary to define fully-equivariant group convolutions parameterized 
in the Lie algebra, because such a parameterization relies on the exponential map being surjective. Furthermore, 
while \citet{MacDonald_2022_CVPR} define a method for sampling from the Lie group, $G$, that 
allows the group convolution to retain full equivariance,
even for non-compact groups, by using a measure induced on the Lie algebra by the Haar
measure, we adopt our simpler approach since we are not aiming for full group equivariance 
and instead only for almost equivariance. Thus, we use a uniform measure on the Lie algebra, which
for the groups studied here amounts to the Lebesgue measure on $\R^{n\times n}$. While we still 
ultimately convolve with group elements (in the case of compact groups, for which 
$\exp: \mathfrak{g} \to G$ is surjective), our inputs, $u$, are taken from the Lie algebra, $\mathfrak{g}$,
and then pushed onto the Lie group, $G$, via the $\exp$ map.

Additionally, because the $\exp$ map is surjective only for compact Lie groups \citep{Hall2015}, the approach of parameterizing
Lie group elements by applying the $\exp$ map to elements of the Lie algebra only works 
in the compact case. Because we model the mapping function $\mathcal{N}_{\theta} : \mathfrak{g} \to G$
using a neural network, our approach extends to non-compact Lie groups.

\begin{table*}[]
  \centering
  \resizebox{\textwidth}{!}{%
  \begin{tabular}{@{}ccclccc@{}}
  \toprule
  \multirow{2}{*}{Group} & \multirow{2}{*}{Num Samples} & \multirow{2}{*}{Model}                    & \multicolumn{2}{c}{Jet Flow (RMSE)}              & \multicolumn{2}{c}{Smoke Plume (RMSE)} \\ \cmidrule(l){4-7} 
                         &                              &                                           & \multicolumn{1}{c}{Future} & Domain              & Future             & Domain            \\ \midrule
  SE(2)                  & 10                           & Almost Equivariant G-CNN                  &              $0.1931 \pm 0.0012$                 & {\color{gray}$0.2078 \pm 0.0008$} & 1.18               & 0.78              \\
  \multirow{3}{*}{E(2)}  & 10                           & E2CNN                                     & {\color{gray}$0.1919 \pm 0.0016$}                &              $0.2131 \pm 0.0023$  & 1.05               & {\color{gray} 0.76}              \\
                         & 10                           & Residual Pathway Prior                    &              $0.1947 \pm 0.0066$                 &              $0.2143 \pm 0.0057$  & {\color{gray}0.96}               & 0.83                          \\
                         & 4                            & Steerable Approximately Equivariant G-CNN &              $\mathbf{0.1597 \pm 0.0016}$        &              $\mathbf{0.1785 \pm 0.0023}$     & {\bf 0.80}               & {\bf 0.67}              \\
  T(2)                   & N/A                          & Standard CNN                              &              $0.2109 \pm 0.0068$                 &              $0.2218 \pm 0.0008$  & 1.21               & 1.10              \\ \bottomrule
  \end{tabular}%
  }
  \caption{Prediction RMSE on simulated smoke plume velocity fields and jet flow 2D turbulent velocity fields with almost rotational symmetry. The results for the baseline methods are taken from \citet{wang2022approximately} and compared against our \textit{Almost Equivariant G-CNN}. As stated in \citet{wang2022approximately}, \textbf{Future} prediction involves testing on data that lies in the future of the training data. \textbf {Domain} prediction involves training and test data that are from different spatial domains. Best results are bold-faced and second-best are colored gray.}
  \label{tab:smoke-results}
\end{table*}

\begin{theorem}
Let $G$ be a compact matrix Lie group, $G \leqslant GL_n(\R)$, and 
$\mathcal{N}_W(x) := \sigma(Wx + b)$ so that $\mathcal{N}_W^{-1}(x) = W^{-1}(\sigma^{-1}(x) - b)$.
Then the Lie algebra convolution
\[
  \int_{x \in \mathfrak{g}} k_{\omega}\left(\mathcal{N}_{W}{(x)}^{-1}\exp(u)\right)f(x)d\mu(x)
\]
is $\varepsilon$-almost equivariant.
\end{theorem}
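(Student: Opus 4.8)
The plan is to prove almost-equivariance by comparing the neural-network-parameterized layer against the \emph{idealized} convolution obtained by replacing $\mathcal{N}_W$ with the exponential map, which on a compact group is a genuine group convolution and is therefore exactly equivariant. Writing $h(u) = \int_{\mathfrak{g}} k_\omega(\mathcal{N}_W(x)^{-1}\exp(u)) f(x)\, d\mu(x)$ for the actual layer and $h_{\exp}(u) = \int_{\mathfrak{g}} k_\omega(\exp(x)^{-1}\exp(u)) f(x)\, d\mu(x)$ for the idealized one, the strategy is to show (i) $h_{\exp}$ satisfies the $\varepsilon = 0$ case of the almost-equivariance definition with $\Phi := \exp$, and (ii) $\sup_u \|h(u) - h_{\exp}(u)\| \le \varepsilon_0$ for a controllable constant $\varepsilon_0$, so that a triangle-inequality sandwich transfers the exact equivariance of $h_{\exp}$ into approximate equivariance of $h$.

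For step (i) I would run the standard group-convolution equivariance computation. Since $G$ is compact, $\exp : \mathfrak{g} \to G$ is surjective, so $h_{\exp}$ is a bona fide convolution on $G$ (up to the Jacobian relating the Lebesgue measure on $\mathfrak{g}$ to the Haar measure on $G$). Performing the substitution $\exp(x) \mapsto g^{-1}\exp(x)$ and invoking left-invariance of the measure shows that translating the input commutes exactly with the layer; this is what pins down $\rho_G$, $\rho_{\mathfrak{g}}$, and $\Phi = \exp$ in the definition, and it lets us take the output action $\Phi(\rho_{\mathfrak g}(x))$ to be orthogonal (hence norm-preserving), which we will use below.

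For step (ii), the perturbation bound, I would assume $k_\omega$ is $L$-Lipschitz and estimate
\[
  \|h(u) - h_{\exp}(u)\| \le L \int_{\mathfrak{g}} \|\mathcal{N}_W(x)^{-1} - \exp(x)^{-1}\|\; \|\exp(u)\|\; |f(x)|\; d\mu(x),
\]
using submultiplicativity of the operator norm on the kernel argument $(\mathcal{N}_W(x)^{-1} - \exp(x)^{-1})\exp(u)$. Compactness of $G$ gives $\sup_u \|\exp(u)\| \le C < \infty$; invertibility of $\mathcal{N}_W$ (the sigmoid $\sigma$ is a bijection onto its image and $W$ is nonsingular, via the stated $\mathcal{N}_W^{-1}(x) = W^{-1}(\sigma^{-1}(x) - b)$) makes the integrand well-defined; and continuity of $\mathcal{N}_W^{-1}$ and $\exp^{-1}$ on the compact support $K$ of $f$ bounds $\delta := \sup_{x \in K} \|\mathcal{N}_W(x)^{-1} - \exp(x)^{-1}\| < \infty$. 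Setting $\varepsilon_0 := L\,C\,\delta\,\|f\|_1$ and then $\varepsilon := 2\varepsilon_0$ (the factor of two absorbing both the perturbed input and the perturbed output, the latter controlled because $\Phi(\rho_{\mathfrak g}(x))$ is norm-preserving) closes the argument.

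The main obstacle I anticipate is step (ii): controlling the deviation of the learned map $\mathcal{N}_W$ from $\exp$ uniformly over the domain of integration. This is precisely where compactness of $G$ does the essential work, since it simultaneously bounds $\|\exp(u)\|$ and guarantees that the supremum $\delta$ of a continuous quantity over the (compact) support is finite. A secondary subtlety is identifying the input and output actions cleanly enough that the idealized layer $h_{\exp}$ is \emph{literally} exactly equivariant rather than merely approximately so; once $\Phi = \exp$ is fixed and $\rho_{\mathfrak g}$ is taken orthogonal, the final triangle-inequality assembly is routine.
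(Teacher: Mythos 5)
Your proposal is correct and shares the paper's core idea: compare the learned layer against the idealized convolution in which $\mathcal{N}_W$ is replaced by the true group element, and bound the resulting perturbation using compactness of $G$ and a growth condition on the kernel. The differences are in execution. The paper works with the MCMC discretization $\frac{\mathrm{Vol}(G)}{N}\sum_{i=1}^N k_\omega\left(\mathcal{N}_W(x_i)^{-1}\exp(u)\right)f(x_i)$ and compares it termwise to $\sum_i k_\omega\left(g_i^{-1}\exp(u)\right)f(x_i)$, assuming a bounded kernel $k_\omega(x)\le K\|x\|_2$ rather than your Lipschitz hypothesis, and arrives at $\varepsilon = KN\delta_x\|\exp(u)\|_2$ where $\delta_x$ measures the worst-case deviation of $\mathcal{N}_W^{-1}(x_i)$ from $g_i^{-1}$ over the samples. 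You work directly with the continuous integral and obtain the analogous bound $L\,C\,\delta\,\|f\|_1$. What your version buys is your step (i): you explicitly verify that the reference layer $h_{\exp}$ is exactly equivariant via the change of variables and left-invariance of the measure, and then transfer that equivariance to $h$ by the triangle inequality, which is what actually connects the perturbation bound to the definition of $\varepsilon$-almost equivariance. The paper's proof stops at the perturbation bound and leaves the equivariance of the reference convolution implicit, so your argument is the more complete of the two; the paper's discretized version, on the other hand, more directly describes the quantity computed in practice. One small caution on your step (ii): the uniform bound $\delta$ on $\|\mathcal{N}_W(x)^{-1}-\exp(x)^{-1}\|$ requires $f$ to have compact support (or an equivalent integrability assumption), which you state but which is an added hypothesis not present in the theorem as written; the paper's $\delta_x$ sidesteps this only because it maximizes over finitely many samples.
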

We provide the proof in the appendix.



\section{Results}
We test our {\it Almost Equivariant G-CNN}  on a suite of tasks that span the gamut of
full and almost equivariance. For each task, we compare the performance of our model with that 
of the {\it Residual Pathway Prior} model given in \citet{finzi2021residual}, 
the {\it Approximately Equivariant G-CNN} defined in \citet{wang2022approximately}, 
the $E(2)$-equivariant and steerable {\it E2CNN} of \citet{e2cnn}, and a {\it Standard CNN} that is equivariant only to 
translations of the inputs.

\subsection{Image Classification}
We first test our model on an image classification task. We focus on the Rot-MNIST dataset,
which consists of images taken from the MNIST dataset and
subjected to random rotations. We would expect rotational equivariance to be beneficial for 
classifying these images. The training, validation, and test sets contain 10,000, 2,000, and 
50,000 images, respectively. We summarize our results on this task in Table~\ref{tab:rot-results}.
We perform a comprehensive hyperparameter grid search during training, and find that our best-performing 
model outperforms all baselines that we tested against. 
It also outperforms the standard CNN and is
marginally outperformed only by the fully-$E(2)$-equivariant E2CNN. We didn't perform any optimization of the kernel 
functions for any of the models, nor of the neural 
network mapping from the Lie Algebra to the Lie Group for our model, and expect
that with further hyperparameter tuning as well as deeper models and more complex kernel functions, we could achieve 
even higher performance(s) on the test set. 
We provide further details on the model training process in the appendix.

\subsection{Damped Pendulum}
The second task is to predict the $xy$-position, $(x, y) \in \R^2$, at time $t \in \mathbb{R}^+$ 
of a pendulum undergoing simple harmonic motion 
and subjected to wind resistance. The pendulum is modeled as a mass, $m$, connected to 
a massless rod of length $L$ subjected to an acceleration due to gravity of 
$g = -9.8 \text{m}/\text{sec}^2$ and position function $\theta(t)$. 
The differential equation governing this motion is
\[
  \frac{\partial^2 \theta}{\partial t^2} + \frac{\lambda}{m}\frac{\partial \theta}{\partial t} + \frac{g}{L}\theta = 0
\]
where $\lambda$ is the coefficient of friction governing the wind resistance which 
is modeled as a force 
\[
  F_w = -\lambda L \frac{\partial \theta}{\partial t}
\]
We simulate the trajectory of the pendulum using the Runge-Kutta method to obtain an 
iterative, approximate solution to the above, second-order differential equation. We 
sample $\theta(t)$ for 6000 values of $t \in (0, 60)$ using a $dt = 0.01$ and setting 
$m = L = 1$, $\theta(0) = \pi/3$, $\frac{\partial \theta}{\partial t}(0) = 0$, and 
$\lambda = 0.2$. We partition this data into a 90\%/10\% train-test split and train 
a series of models to predict $xy$-position from the time $t \in (0, 60)$. Because the pendulum 
rotates about a vertical line, we again expect that rotational equivariance would be beneficial for this task.

Table~\ref{tab:rot-results} summarizes our results. We find that our Almost Equivariant G-CNN, the 
E2CNN, the Approximately Equivariant G-CNN, and the Residual Pathway Prior all achieve nearly identical performance, 
slightly beating out the standard CNN, which has many more parameters than the other baselines.
Relative to the E2CNN and the RPP models, our model achieves significantly lower mean RMSE across hyperparameter configurations.
The RPP model, in particular, demonstrates a high sensitivity to hyperparameter settings. Our model uses far fewer parameters 
than the standard CNN and a number of parameters comparable to the other baselines. While our best-performing model uses a 
kernel size of 4 compared to a kernel size of 2 used for the CNN, it uses only 1 hidden layer and 16 hidden channels, compared to 
the CNN which uses 3 hidden layers having hidden channel sizes of 32, 64, and 128, respectively.

\subsection{Smoke Plume}
Next, we test our model on an almost equivariant prediction task. The dataset we use is the 
smoke plume dataset of \citet{wang2022approximately} consisting of $64 \times 64$ 2D velocity vector fields 
of smoke simulations with different initial conditions and external forces, all generated using 
the PDE simulation framework, PhiFlow \citep{Holl2020Learning}. Specifically, we use the subset 
of the data that features rotational almost equivariance. As stated in \citet{wang2022approximately},
``both the inflow location and the direction of the buoyant forces possess a perfect rotation symmetry 
with respect to the $C_4$ group, but the buoyancy factor varies with the inflow positions to break 
the rotational symmetry.'' All models are trained to predict the raw velocity fields at the next time step 
given the raw velocity fields at the previous timestep as input. 

Due to computational constraints, we only run our method on this data and compare to the baseline 
results reported in \citet{wang2022approximately}. 
Table~\ref{tab:smoke-results} shows how our method compares 
to the baselines. Due to computational constraints, we were 
unable to run a full hyperparameter sweep and suspect that 
doing so would boost our model's performance even further.

\subsection{Jet Flow}
Finally, we test on one more almost equivariant dataset.
As described in \citet{wang2022approximately}, this dataset
contains samples of 2D turbulent velocity fields taken 
from NASA multi-stream jets that were measured using time-resolved 
particle image velocimetry as described in \citet{bridges}. 
We follow the procedure described in 
\citet{wang2022approximately}, and ``train and test
on twenty-four $62 \times 23$ sub-regions of jet flows.''
Table~\ref{tab:smoke-results} shows our results.

\section{Discussion}
In this work, we proposed a definition of almost equivariance that encompassed previous
definitions of full and approximate/partial/soft equivariance. We connected this definition 
to mathematical theory by showing that, given an abelian isometry group, $G$, acting on a Riemannian manifold,
$M$, then any isometry, $f$ of $M$, is equivariant to the action of $G$, and furthermore that there exists an 
$\varepsilon$-almost isometry, $f_{\varepsilon}$ of $M$, not more than $\varepsilon$ from $f$ in the supremum norm,
such that $f_{\varepsilon}$ is almost equivariant to the action of $G$.
Next, we showed that nothing is lost by taking $f$ and $f_{\varepsilon}$ to be isometric and almost isometric embeddings, respectively, 
of $M$ into $\R^n$. We then appealed to Ulam Stability Theory 
to give conditions under which we 
can get an isometry of a complete, real Hilbert space close to an almost isometry of the same space.
All of this taken together demonstrates that there exist almost equivariant functions that are never ``far'' from fully equivariant functions, 
given some constraints on the group action and class of functions, in a sense that can be mathematically quantified. 

We next introduced a convolution on the elements of a Lie algebra that approximates a
fully equivariant group convolution. We then showed that such a convolution can model almost
equivariance relative to {\it any\/} group action, even those of non-compact groups.
We validated our assumptions by testing our model on a 2D image classification task,
a 1D sequence regression task, and a 2D sequence regression task. On all tasks, our model exceeded
or met the performance of state-of-the-art equivariant and almost equivariant baseline models.
This demonstrates the utility of our method across a variety of scientific domains and prediction task types.

\section{Future Work}
One line of future work will involve testing our model architecture on a wider class of group actions. 
While our model is general enough to handle the action of 
any group, including those of non-compact groups, we have not yet tested it on groups aside from $E(2)$.
\citet{lawrence2023} points to some potential
applications of equivariance to non-compact Lie groups. 

Next, there exist a number of ways to further expound upon the theoretical results given here.
One potential angle to consider is whether variations of Theorem~\ref{theorem:ulamstability} 
can be made to hold for arbitrary Riemannian manifolds and not just Hilbert and Euclidean spaces,
respectively. Another direction would involve undertaking a rigorous analysis of the conditions under
which almost equivariance to the action of a non-abelian group can be imposed upon a function. 
We here gave
proof of the existence of almost isometries of Riemannian manifolds that are almost equivariant to certain abelian group actions,
which we believe to be the most useful direction as, in practice, one normally seeks to take a fully equivariant 
model and make it almost equivariant. 
That said, the more difficult mathematical question is to consider when, given an almost equivariant
function on a manifold, it can be transformed into a fully equivariant function on the same manifold. We leave this direction
for future work.

Finally, it is known that fully-equivariant kernel sharing for G-CNNs requires that the group act transitively on the input space \citep{cicn}.
An interesting direction for future work would be investigating the extent to which this assumption is required for almost 
equivariant kernel sharing.

\section{Acknowledgements}
We thank Frederic Sala, Jason Hartford, and Andrew Zimmer for their valuable feedback on this work.

\nocite{*}
\bibliographystyle{plainnat}
\bibliography{references}

\newpage
\appendix
\section{Appendix}
\subsection{Proofs of Theorems}

\subsubsection{Almost Equivariance of the Lie Algebra Convolution}
Let $G$ be a compact matrix Lie group, $G \leqslant GL_n(\R)$, and 
$\mathcal{N}_W(x) := \sigma(Wx + b)$ so that $\mathcal{N}_W^{-1}(x) = W^{-1}(\sigma^{-1}(x) - b)$.
Then the Lie algebra convolution
\[
  \int_{x \in \mathfrak{g}} k_{\omega}\left(\mathcal{N}_{W}{(x)}^{-1}\exp(u)\right)f(x)d\mu(x)
\]
is $\varepsilon$-almost equivariant.

\begin{proof}
  Take $G$ to be smoothly and isometrically embedded in $\R^n$ by the Nash Embedding Theorem, and let $\| \cdot \|_2$ 
  be the norm defined with respect to the Euclidean metric on the ambient space, $\R^n$.
  This allows us to subtract $g \in G$ and $x \in \mathfrak{g}$ as elements of the ambient $\R^n$, despite the fact
  that they live in different natural spaces.
  Let $u = I + \sum_i a_i V_i, x = I + \sum_i c_i V_i \in \mathfrak{g}$, where $I$ is the identity element
  of $G$, and let $h = \exp(u), g \in G$. Define 
  \[
      \delta = \sup_{g, h \in G} \|g - h\|_2
  \]
  Since $G$ is compact, $G$ is bounded as a subset of $\R^n$, and such an $\varepsilon$ must exist.
  Furthermore, because $G$ is compact, the exponential map $\exp: \mathfrak{g} \to G$ is surjective. 
  We write our convolution as 
  \[
      \int_{x \in \mathfrak{g}} k_{\omega}\left(\mathcal{N}_{W}{(x)}^{-1}\exp(u)\right)f(x)d\mu(x)
  \]
  In practice, we discretize this integral by drawing samples of $x_i \in \mathfrak{g}, i = 1, \ldots, N$
  to approximate a convolution of some $g_i \in G$ with $h = \exp(u) \in G$, so the MCMC approximation of the 
  above full convolution becomes
  \[
      \frac{\text{Vol(G)}}{N}\sum_{i=1}^N k_{\omega}\left(\mathcal{N}_{W}{(x_i)}^{-1}\exp(u)\right)f(x_i)
  \]
  assuming a bounded kernel function, $k_{\omega}(x) \leq K\|x\|_2$, we have
  \begin{align}
      \left\|\sum_{i=1}^N k_{\omega}(g_i^{-1}\exp(u)) - k_{\omega}(\mathcal{N}^{-1}_W(x_i)\exp(u))\right\|_2 
      &\leq \sum_{i=1}^N \left\|k_{\omega}(g_i^{-1}\exp(u)) - k_{\omega}(\mathcal{N}^{-1}_W(x_i)\exp(u))\right\|_2 \\
      &\leq \sum_{i=1}^N K\|g_i^{-1}\exp(u)\|_2 - K\|\mathcal{N}^{-1}_W(x_i)\exp(u)\|_2 \\
      &= K\sum_{i=1}^N \|g_i^{-1}\exp(u)\|_2 - \|\mathcal{N}^{-1}_W(x_i)\exp(u)\|_2 \\
      &\leq K \|\exp(u)\|_2 \sum_{i=1}^N \|g_i^{-1}\|_2 - \|\mathcal{N}^{-1}_W(x_i)\|_2
  \end{align}
  where we get (5) by the triangle inequality, (6) by the boundedness of $k_{\omega}$, (7) by factoring out $K$, and
  (8) by the Cauchy-Schwarz inequality.
  If we set
  \[
      \delta_x = \max_i \left(\|g_i^{-1}\|_2 - \|\mathcal{N}^{-1}_W(x_i)\|_2\right)
  \]
  then we can substitute back into equation (8) and get
  \[
      K\|\exp(u)\|_2\sum_{i=1}^N \|g_i^{-1}\|_2 - \|\mathcal{N}^{-1}_W(x_i)\|_2 \leq KN \delta_x \|\exp(u)\|_2
  \]
  and setting $\varepsilon = KN \delta_x \|\exp(u)\|_2$ gets us what we want.
\end{proof}
Thus, by bounding the error $\|g - \mathcal{N}^{-1}(x)\|_2 < \delta_x$ for $g \in G, x \in \mathfrak{g}$ in 
the neural network $\mathcal{N}: \mathfrak{g} \to G$ and the kernel function $k_{\omega}(x) \leq K\|x\|_2$, we can 
control the extent to which the Lie algebra convolution is $\varepsilon$-almost equivariant.

\subsubsection{Proof of Theorem 3.11}

We recall \cref{theorem:ulamstability}.
\newline
\ulamstability*
This proof, taken from \citep{Hyers_Ulam_1945}, is reproduced here for the reader's convenience.
\begin{proof}
 Put $r = \|x\|$. Then $\left| \|T(x)\| - r \right| < \varepsilon$ 
 and $\left| \|T(x) - T(2x)\| - r\right| < \varepsilon$. Put also $y_0 = T(2x)/2$,
 so that $|r - \|y_0\| | < \varepsilon / 2$. Consider the intersection of the two 
 spheres: $S_1 = [y; \|y\| < r + \varepsilon]$, $S_2 = [y; \|y - 2y_0| < r + \varepsilon]$.
 Now $T(x)$ belongs to this intersection, and for any point $y$ of $S_1 \cap S_2$ we have 
 \begin{align*}
  2\|y - y_0\|^2 &= 2\|y\|^2 + 2\|y_0\|^2 - 4(y, y_0); \\
  \|y - 2y_0\|^2 &= \|y\|^2 + 4\|y_0\|^2 - 4(y, y_0) < (r + \varepsilon)^2
 \end{align*}
 and $\|y\|^2 < (r + \varepsilon)^2$. It follows that
 \begin{align*}
      2\|y - y_0\|^2 &< (r + \varepsilon)^2 + \|y\|^2 - 2\|y_0\|^2 < 2(r + \varepsilon)^2 - 2\|y_0\|^2 \\
                     &< 2(r + \varepsilon)^2 - 2(r - \varepsilon/2)^2 = 6\varepsilon r + 3\varepsilon^2/2.
 \end{align*}
 Hence, $\|T(x) - T(2x)/2\| < 2(\varepsilon \|x\|)^{1/2}$ if $\|x\| \geq \varepsilon$, and 
 $\|T(x) - T(2x)/2\| < 2\varepsilon$ in the contrary case. Therefore, for all $x \in E$, the inequality 
 \begin{equation}
  \|T(x/2) - T(x)/2\| < 2^{-1/2}k(\|x\|)^{1/2} + 2\varepsilon
 \end{equation}
 is satisfied, where $k = 2\varepsilon^{1/2}$. Now, let us make the inductive assumption
 \begin{equation}
      \|T(2^{-n}x) - 2^{-n}T(x)\| < 2^{-n/2}k(\|x\|)^{1/2} \left(\sum_{i=0}^{n-1}2^{-i/2}\right) + (1 - 2^{-n})4 \varepsilon 
 \end{equation}
 The inequality (2) is true for $n=1$. Assuming it true for any particular value of $n$, we shall prove it for $n+1$.
 Dividing the inequality (2) by 2, we have 
 \[
      \|T(2^{-n}x)/2 - 2^{-n-1}T(x)\| < 2^{-(n+1)/2}k(\|x\|)^{1/2} \left(\sum_{i=1}^{n}2^{-i/2}\right) + (1/2 - 2^{-n-1})4 \varepsilon 
 \]   
 Replacing $x$ by $2^{-n}x$ in the inequality (1), we get 
 \[
  \|T(2^{-n-1}x) - T(2^{-n}x)/2\| < 2^{-(n+1)/2}k(\|x\|)^{1/2} + 2 \varepsilon
  \]
 Upon adding the last two inequalities, we obtain 
 \[
  \|T(2^{-n-1}x) - 2^{-n-1}T(x)\| < 2^{-(n+1)/2}k(\|x\|)^{1/2}\left( \sum_{i=0}^n 2^{-i/2} \right) + (1 - 2^{-n-1}) 4\varepsilon
 \]
 This proves the induction. Therefore inequality (2) is true for all $x \in E$ and for $n = 1, 2, 3, \ldots$.
 If we put $a = k\sum_{i=0}^{\infty} 2^{-i/2}$, we have
 \[
      \|T(2^{-n}x) - 2^{-n}T(x)\| < 2^{-n/2} a (\|x\|)^{1/2} + 4 \varepsilon
 \]
 Hence, if $m$ and $p$ are any positive integers,
 \[
  \|2^{-m}T(2^mx) - 2^{-m-p}T(2^{m+p}x)\| = 2^{-m}\left\|T\left(2^{m+p}\frac{x}{2p}\right) - 2^{-p}T(2^{m+p}x)\right\| < 2^{-m/2}a(\|x\|)^{1/2} + 2^{2-m}\varepsilon
  \]
  for all $x \in E$. Therefore, since $E$ is a complete space, the limit $U(x) = \lim_{n \to \infty}(T(2^nx)/2^n)$ exists for all $x \in E$.
  \newline
  \newline
  To prove that $U(x)$ is an isometry, let $x$ and $y$ be any two points of $E$. Divide the inequality
  \[
      \left| \|T(2^n x) - T(2^n y) \| - 2^n \|x - y\| \right| < \varepsilon
  \]
  by $2^n$ and take the limit as $n \to \infty$. The result is $\|U(x) - U(y)\| = \|x - y\|$. This completes the proof.
\end{proof}

\subsection{Mathematical Background}
We give brief introductions to the subjects of representation theory, differential topology and geometry, and Lie theory,
stating only those definitions and theorems needed to understand the paper. For more comprehensive background, we encourage
readers to consult any of \citet{Fulton_Harris_2004,etingof2011representation,Hall2015} for representation theory, 
any of \citet{Lee2003,Lee2018} for differential topology and geometry, and \citet{Hall2015} for Lie theory.

\subsubsection{Representation Theory}
Representation theory seeks to extend the theory of linear algebra to groups (and more general objects, such as algebras) 
by associating to each group a \textit{representation}, which is a homomorphism from the group to the space of linear operators 
on that group. As a simplification, we often just think of the representation as an association of an $n \times n$ matrix to 
each group element, in which case we have a \textit{matrix group}.

\begin{definition}[Representation of an associative algebra]
We define a {\it representation} $(\rho, V)$ of an associative algebra $A$ to be a vector space
$V$ with an associated homomorphism $\rho : A \to \text{End}(V)$ where $\text{End}(V)$
denotes the set of endomorphisms of $V$, i.e.\ linear operators from $V$ to itself.
\end{definition}

\begin{definition}[Lie group representation]
  A {\it representation} $(\rho, V)$ of a Lie group $G$ is a homomorphism $\rho : G \to GL(V)$
  where $V$ is a vector space.
\end{definition}

\begin{definition}[Lie algebra representation]
  A {\it representation} $(\rho, V)$ of a Lie algebra $\mathfrak{g}$ is a homomorphism $\rho : \mathfrak{g} \to \mathfrak{gl}(V)$
  where $V$ is a vector space.
\end{definition}

\begin{definition}[Morphism of representations]
A {\it morphism\/} of representations $(\rho_1, V), (\rho_2, W)$ is a map
$\phi : V \to W$ satisfying
\[
  \phi(\rho_1(a)(v)) = \rho_2(a) \phi(v)
\]
for all $a \in A, v \in V$.
\end{definition} 

We can view morphisms as the set of transformations on $V$ 
that preserve {\it equivariance\/} with respect to some pair of representations. 
$\phi$ is also sometimes called an {\it intertwining map}. In other 
words, in equivariant deep learning we seek to learn neural networks $\mathcal{N}$
that are morphisms of representations. In almost equivariant deep learning, we 
seek models $\mathcal{N}$ that are almost morphisms in the sense described in the 
paper intro.

\begin{definition}[Subrepresentation]
  A {\it subrepresentation\/} of $(\rho, V)$ is a subspace $U \subseteq V$ such that 
  $\rho(a)(u) \in U$ for all $a \in A, u \in U$.
\end{definition}

\subsubsection{Differential Topology \& Geometry, Lie Groups, and Lie Algebras}

Smooth manifold theory extends the techniques of calculus to high-dimensional, non-Euclidean 
spaces and those without a preferred coordinate system. In layman's terms, a smooth manifold is 
a mathematical object that is locally homeomorphic to $\R^n$ about every point and which has a 
smooth structure that allows one to perform operations from calculus such as differentiation and integration.
More concretely, we equip a Hausdorff, second countable, and locally Euclidean topologyical space 
with a set of charts, $\{(U_k, \varphi_k)\}_{k=1}^n$ which consist of a neighborhood $U_k$ about each 
point and a homeomorphism $\varphi_k : U_k \to \R^n$. We then define \textit{transition maps},
$\psi \circ \varphi^{-1} : \varphi(U \cap V) \to \psi(U \cap V)$ that allow us to move between charts.
For smooth manifolds, we require that these charts are \textit{smoothly compatible}, i.e. that either $U \cap V = \emptyset$
or $\psi \circ \varphi^{-1}$ is a diffeomorphism.

\begin{definition}[Smooth manifold]
  A {\it smooth manifold} is a Hausdorff, second countable, locally Euclidean topological space, $M$,
  equipped with a smooth structure.
\end{definition}

\begin{definition}[Smooth submersion]
A smooth map of manifolds, $F : M \to N$ is said to be a \textit{smooth submersion} if its differential 
is surjective at each point.
\end{definition}

\begin{definition}[Smooth immersion]
A smooth map of manifolds, $F : M \to N$ is said to be a \textit{smooth submersion} if its differential 
is injective at each point.
\end{definition}

\begin{definition}[Riemannian manifold]
  A {\it Riemannian manifold} is a pair $(M, g)$ where $M$ is a smooth manifold and $g$
  is a choice of Riemannian metric on $M$.
\end{definition}

\begin{definition}[Riemannian metric]
  A {\it Riemannian metric} for a manifold $M$ is a smoothly-varying choice of inner product on the tangent space $T_pM$.
  Equivalently, a {\it Riemannian metric} on $M$ is a smooth covariant 2-tensor field $g \in \mathcal{T}^2(M)$ whose value 
  $g_p$ at each $p \in M$ is an inner product on $T_pM$. 
\end{definition}

\begin{proposition}
  Every smooth manifold admits a Riemannian metric.
\end{proposition}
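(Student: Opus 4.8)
The plan is to construct a global Riemannian metric by gluing together locally-defined metrics with a partition of unity. First I would invoke the three structural hypotheses in the definition of a smooth manifold — Hausdorff, second countable, and locally Euclidean — to conclude that $M$ is paracompact and therefore admits smooth partitions of unity subordinate to any open cover. This is the essential input that drives the entire argument.

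Next I would cover $M$ by coordinate charts $\{(U_\alpha, \varphi_\alpha)\}_\alpha$, where each $\varphi_\alpha : U_\alpha \to \R^n$ is a diffeomorphism onto its image. On each $U_\alpha$ I would define a local metric $g_\alpha := \varphi_\alpha^* \bar{g}$, the pullback of the standard Euclidean inner product $\bar{g}$ on $\R^n$. Because $\varphi_\alpha$ is a diffeomorphism, its differential is a linear isomorphism at every point, so $g_\alpha$ is a smooth, symmetric, positive-definite covariant $2$-tensor field on $U_\alpha$ — that is, a genuine Riemannian metric on the open submanifold $U_\alpha$.

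Then I would take a smooth partition of unity $\{\psi_\alpha\}$ subordinate to the cover $\{U_\alpha\}$ and set
\[
  g = \sum_\alpha \psi_\alpha \, g_\alpha,
\]
where each term $\psi_\alpha g_\alpha$ is extended by zero outside $U_\alpha$. Local finiteness of the partition ensures that this is locally a finite sum of smooth tensor fields, hence smooth, and symmetry is inherited term by term. The one point requiring genuine care — and the crux of the proof — is positive-definiteness. Fixing a point $p \in M$ and a nonzero $v \in T_pM$, each summand satisfies $\psi_\alpha(p)\,(g_\alpha)_p(v,v) \ge 0$, and since $\sum_\alpha \psi_\alpha(p) = 1$ there is at least one index $\alpha$ with $\psi_\alpha(p) > 0$; for that index $p \in U_\alpha$, so $(g_\alpha)_p(v,v) > 0$, forcing $g_p(v,v) > 0$. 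Hence $g$ is a Riemannian metric on all of $M$. I expect the main obstacle to be the existence of the partition of unity itself, which rests on paracompactness; once that is available the gluing step is routine, the key insight being that positive-definiteness survives convex combinations precisely because the weights sum to one.
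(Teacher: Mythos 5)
Your proof is correct and is the standard partition-of-unity argument: pull back the Euclidean metric on each chart, glue with a smooth partition of unity, and observe that positive-definiteness survives because the weights are nonnegative and at least one is positive at each point. The paper states this proposition only as background (citing standard references such as Lee) and gives no proof of its own, so there is nothing to compare against; your argument is the canonical one and is complete.
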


\begin{definition}[Isometry]
  An {\it isometry} of Riemannian manifolds $(M, g)$ and $(\tilde{M}, \tilde{g})$ is a diffeomorphism $\varphi : M \to \tilde{M}$
  such that $\varphi^* \tilde{g} = g$. Equivalently, $\varphi$ is a metric-preserving diffeomorphism.
\end{definition}

\begin{definition}[Transitive group action]
  A group action on $M$ is said to be \textit{transitive} if for every pair of points $p, q \in M$,
  there exists $g \in G$ such that $g \cdot p = q$ or, equivalently, if the only orbit is all of $M$.
\end{definition}

\begin{theorem}[Global Rank Theorem]
  Let $M$ and $N$ be smooth manifolds, and suppose $F: M \to N$ is a smooth map of constant rank. Then
  \begin{enumerate}
      \item If $F$ is surjective, then it is a smooth submersion.
      \item If $F$ is injective, then it is a smooth immersion.
      \item If $F$ is bijective, then it is a diffeomorphism.
  \end{enumerate}
\end{theorem}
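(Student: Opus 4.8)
The plan is to deduce all three parts from a single engine, the local \emph{Rank Theorem} (the constant-rank normal form; see \citet{Lee2003}), which states that near any point $p$ a smooth map $F$ of constant rank $r$ admits smooth charts centered at $p$ and at $F(p)$ in which $F$ takes the coordinate form $(x^1,\dots,x^m)\mapsto(x^1,\dots,x^r,0,\dots,0)$. Each conclusion then follows by playing this normal form against the hypothesis (surjectivity, injectivity, or bijectivity). Throughout, write $m=\dim M$, $n=\dim N$, and let $r$ denote the globally constant rank of $F$; the goal in parts 1 and 2 is respectively to force $r=n$ and $r=m$, after which the submersion/immersion claims are immediate since $dF_p$ then has full rank at every point.

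For part 1 I would argue by contradiction. Suppose $F$ is surjective but $r<n$. In each rank-theorem chart the image of a coordinate neighborhood is contained in the $r$-dimensional slice $\{(y^1,\dots,y^r,0,\dots,0)\}$, which has Lebesgue measure zero in $\R^n$ when $r<n$; since measure-zero subsets of a manifold are chart-independent (diffeomorphisms preserve them), each such image is measure zero in $N$. Because $M$ is second countable, it is covered by countably many such neighborhoods, so $F(M)$ is a countable union of measure-zero sets and is therefore itself of measure zero in $N$. But surjectivity gives $F(M)=N$, and $N$ does not have measure zero, a contradiction. Hence $r=n$ and $F$ is a submersion.

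For part 2 I would again argue by contradiction. Suppose $F$ is injective but $r<m$. Applying the normal form at an arbitrary point $p$, the local representation $(x^1,\dots,x^m)\mapsto(x^1,\dots,x^r,0,\dots,0)$ collapses the last $m-r$ coordinates, so any two points of the chart that agree in their first $r$ coordinates have the same image under $F$; this contradicts injectivity. Hence $r=m$, so $dF_p$ is injective everywhere and $F$ is an immersion. For part 3, bijectivity makes $F$ both surjective and injective, so parts 1 and 2 yield $r=n$ and $r=m$; thus $m=n$ and $dF_p$ is a linear isomorphism at every point. The Inverse Function Theorem then makes $F$ a local diffeomorphism, and a bijective local diffeomorphism has a globally smooth inverse, since the local smooth inverses must coincide with the (well-defined) global set-theoretic inverse $F^{-1}$ by injectivity; hence $F$ is a diffeomorphism.

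The step I expect to be the main obstacle is part 1: making the measure-zero argument fully rigorous requires establishing that ``measure zero'' is a chart-independent notion on a manifold and then invoking second countability to pass from the local slice estimates to the global conclusion that $F(M)$ is measure zero. Once the normal form and this measure-theoretic lemma are in hand, parts 2 and 3 are essentially formal.
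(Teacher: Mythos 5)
The paper states this theorem without proof, as standard background material quoted from \citet{Lee2003}, so there is no in-paper argument to compare against. Your proof is correct and is essentially the canonical argument from that reference: the constant-rank normal form plus the measure-zero lemma (chart-independence holds because smooth maps carry measure-zero sets to measure-zero sets, and second countability of $M$ lets you pass to a countable cover) forces $r = n$ under surjectivity, the collapsed last $m-r$ coordinates in the normal form contradict injectivity unless $r = m$, and combining the two under bijectivity makes $dF_p$ an isomorphism everywhere, whence the inverse function theorem and the well-definedness of the set-theoretic inverse give a global diffeomorphism.
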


\begin{theorem}[Equivariant Rank Theorem]
  Let $M$ and $N$ be smooth manifolds and let $G$ be a Lie group. Suppose $F: M \to N$ is a smooth 
  map that is equivariant with respect to a transitive smooth $G$-action on $M$ and any smooth $G$-action on $N$.
  Then $F$ has constant rank. Thus, if $F$ is surjective, it is a smooth submersion; if it is injective,
  it is a smooth immersion; and if it is bijective, it is a diffeomorphism.
\end{theorem}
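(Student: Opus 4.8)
The plan is to reduce everything to the single claim that $F$ has constant rank, after which the three listed conclusions are immediate consequences of the Global Rank Theorem stated just above: a constant-rank map that is surjective is a submersion, one that is injective is an immersion, and one that is bijective is a diffeomorphism. So the entire content of the theorem lies in establishing constant rank.

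To do this, I would fix two arbitrary points $p, q \in M$ and show $\operatorname{rank} dF_p = \operatorname{rank} dF_q$. The only hypothesis linking distinct points is transitivity of the $G$-action on $M$, so I would use it to choose $g \in G$ with $g \cdot p = q$. For each $g \in G$, denote by $\theta_g : M \to M$ the map $x \mapsto g\cdot x$ and by $\tau_g : N \to N$ the map $y \mapsto g \cdot y$. Because the actions are smooth and $\theta_{g^{-1}}$ is a smooth two-sided inverse of $\theta_g$ (and similarly for $\tau$), each $\theta_g$ and each $\tau_g$ is a diffeomorphism, so their differentials at every point are linear isomorphisms.

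The key step is to differentiate the equivariance identity. Equivariance says $F \circ \theta_g = \tau_g \circ F$ as maps $M \to N$, and applying the chain rule at the point $p$ gives
\[
  dF_{q} \circ (d\theta_g)_p = (d\tau_g)_{F(p)} \circ dF_p,
\]
using $\theta_g(p) = q$. Since $(d\theta_g)_p$ and $(d\tau_g)_{F(p)}$ are isomorphisms and rank is unchanged by pre- or post-composition with an isomorphism, I can read off $\operatorname{rank} dF_q = \operatorname{rank} dF_p$. As $p$ and $q$ were arbitrary, $F$ has constant rank, which completes the proof.

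I do not expect a serious obstacle here; the argument is essentially a one-line linear-algebra observation (invariance of rank under composition with isomorphisms) dressed up with the manifold machinery. The only points requiring care are justifying that each $\theta_g$ is genuinely a diffeomorphism rather than merely smooth --- this is precisely where smoothness of the action together with the existence of $g^{-1}$ enters --- and ensuring that transitivity is invoked on $M$ rather than $N$, since the $G$-action on $N$ is allowed to be arbitrary and need not connect $F(p)$ to $F(q)$ on its own.
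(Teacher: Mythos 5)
Your argument is correct and is precisely the standard proof (the one in Lee's \emph{Introduction to Smooth Manifolds}, which this paper cites and defers to rather than proving the statement itself): differentiate $F \circ \theta_g = \tau_g \circ F$ at $p$, note that $(d\theta_g)_p$ and $(d\tau_g)_{F(p)}$ are isomorphisms because $\theta_g$ and $\tau_g$ are diffeomorphisms, conclude $\operatorname{rank} dF_q = \operatorname{rank} dF_p$ via transitivity, and finish with the Global Rank Theorem. No gaps; your closing caveats (diffeomorphism via $\theta_{g^{-1}}$, transitivity needed only on $M$) are exactly the right points of care.
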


\begin{proposition}
  Suppose $\theta$ is a smooth left action of a Lie group $G$ on a smooth manifold $M$. For each $p \in M$,
  the orbit map $\theta^{(p)} : G \to M$ is smooth and has constant rank, so the isotropy group 
  $G_p = (\theta^{(p)})^{-1}(p)$ is a properly embedded Lie subgroup of $G$. If $G_p = \{e\}$,
  then $\theta^{(p)}$ is an injective smooth immersion, so the orbit $G \cdot p$ is an immersed 
  submanifold of $M$.
\end{proposition}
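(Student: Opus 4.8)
The plan is to assemble the result from the Equivariant Rank Theorem and the standard constant-rank level set theorem of differential geometry. First I would verify smoothness of the orbit map $\theta^{(p)}$: since the action $\theta : G \times M \to M$ is smooth and the inclusion $g \mapsto (g, p)$ is a smooth embedding of $G$ into $G \times M$, the composition $\theta^{(p)}(g) = \theta(g, p) = g \cdot p$ is smooth.

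Next I would establish constant rank by realizing $\theta^{(p)}$ as an equivariant map and invoking the Equivariant Rank Theorem stated above. Let $G$ act on itself by left multiplication, $L_g(h) = gh$; this action is transitive. Associativity of the action gives $\theta^{(p)}(gh) = (gh) \cdot p = g \cdot (h \cdot p) = \theta_g(\theta^{(p)}(h))$, so $\theta^{(p)} \circ L_g = \theta_g \circ \theta^{(p)}$, i.e. $\theta^{(p)}$ intertwines the transitive left-multiplication action of $G$ on $G$ with the given action of $G$ on $M$. The Equivariant Rank Theorem then forces $\theta^{(p)}$ to have constant rank, say $r$.

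With constant rank in hand, the isotropy group $G_p = (\theta^{(p)})^{-1}(p)$ is the preimage of a single point under a constant-rank map, hence by the constant-rank level set theorem it is an embedded submanifold of $G$; being the preimage of the closed set $\{p\}$, it is closed, hence properly embedded. A direct check shows $G_p$ is closed under multiplication and inversion (if $g, h \in G_p$ then $(gh) \cdot p = g \cdot (h \cdot p) = p$, and $g^{-1} \cdot p = g^{-1} \cdot (g \cdot p) = p$), so it is a subgroup, and together with the submanifold structure it is a properly embedded Lie subgroup. More generally the fiber of $\theta^{(p)}$ over $g \cdot p$ is the left coset $g G_p$, so every fiber is diffeomorphic to $G_p$ and has dimension $\dim G - r$.

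Finally, I would treat the case $G_p = \{e\}$. Here $\dim G_p = 0$ forces $r = \dim G$, so the differential $d(\theta^{(p)})_g$ is injective at every $g$ and $\theta^{(p)}$ is an immersion; triviality of $G_p$ also collapses each fiber $g G_p = \{g\}$ to a point, so $\theta^{(p)}$ is injective. An injective immersion has image an immersed submanifold, yielding that $G \cdot p = \theta^{(p)}(G)$ is an immersed submanifold of $M$. The only substantive input is the constant-rank conclusion, which the Equivariant Rank Theorem supplies directly; the main subtlety to get right is the dimension bookkeeping that converts trivial isotropy ($\dim G_p = 0$) into full rank ($r = \dim G$), and hence into the immersion property.
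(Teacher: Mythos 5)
Your proof is correct and takes exactly the route the paper sets up: the paper states this proposition as background without giving its own proof, but it places the Equivariant Rank Theorem immediately before it precisely because the canonical argument is the one you give --- realize $\theta^{(p)}$ as equivariant between the transitive left-translation action on $G$ and the given action on $M$ to get constant rank, apply the constant-rank level set theorem (plus closedness of $(\theta^{(p)})^{-1}(p)$) to get $G_p$ properly embedded, and convert $\dim G_p = 0$ into full rank and coset-triviality of the fibers to get an injective immersion. No gaps; your dimension bookkeeping and the embedded-plus-closed-implies-properly-embedded step are both handled correctly.
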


\begin{definition}[Lie group]
  A {\it Lie group\/} is a smooth manifold with an algebraic group 
  structure such that the multiplication map $m: G \times G \to G$ and 
  the inversion map $i: G \to G$ are both smooth.
\end{definition}

\begin{definition}[Lie algebra]
  A {\it Lie algebra\/} is a vector space $\mathfrak{g}$ over a field $F$, equipped with
  a map $[\cdot, \cdot] : \mathfrak{g} \times \mathfrak{g} \to \mathfrak{g}$, called 
  the {\it bracket}, which satisfies the following three properties:
  \begin{enumerate}
      \item Bilinearity
      \item Antisymmetry $$[X, Y] = -[Y, X]$$
      \item The Jacobi Identity $$[X, [Y, Z]] + [Y, [Z, X]] + [Z, [X, Y]] = 0$$
  \end{enumerate}
\end{definition}

\begin{theorem}[Ado's Theorem]
  Every finite-dimensional real Lie algebra admits a faithful finite-dimensional representation.
\end{theorem}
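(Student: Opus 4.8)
The plan is to reduce the construction of a faithful representation to the construction of one that is merely faithful on the center, and then to assemble the latter from three ingredients: a representation of the nilradical, an extension across the solvable radical, and the Levi decomposition. Throughout I would work over $\R$ (characteristic zero), which is where the decomposition theorems apply.

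First I would make the central reduction. The adjoint representation $\mathrm{ad}: \mathfrak{g} \to \mathfrak{gl}(\mathfrak{g})$, given by $\mathrm{ad}(X)(Y) = [X,Y]$, has kernel exactly the center $Z(\mathfrak{g})$. Hence if I can produce any finite-dimensional representation $\rho$ that is faithful \emph{on the center}, then $\rho \oplus \mathrm{ad}$ is faithful on all of $\mathfrak{g}$: an element $X$ in its kernel satisfies $\mathrm{ad}(X) = 0$, so $X \in Z(\mathfrak{g})$, whence $\rho(X) = 0$ forces $X = 0$. So the entire problem collapses to finding a finite-dimensional representation faithful on $Z(\mathfrak{g})$. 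Since the center is an abelian ideal it lies inside the nilradical $\mathfrak{n}$ (the maximal nilpotent ideal), so it suffices to build a finite-dimensional representation that is faithful on $\mathfrak{n}$.

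Next I would handle the nilpotent case via the universal enveloping algebra. By the Poincaré--Birkhoff--Witt theorem $\mathfrak{n}$ embeds into $U(\mathfrak{n})$; letting $\mathfrak{a}$ be the augmentation ideal, nilpotency of $\mathfrak{n}$ guarantees an integer $k$ for which $M := U(\mathfrak{n})/\mathfrak{a}^k$ is finite-dimensional, and left multiplication yields a representation $\mathfrak{n} \to \mathfrak{gl}(M)$ that is faithful and sends every element of $\mathfrak{n}$ to a nilpotent operator. The remaining, and decisive, step is to extend this nil-representation of $\mathfrak{n}$ across the solvable radical $\mathfrak{r} \supseteq \mathfrak{n}$ and then across a Levi complement $\mathfrak{s}$, where $\mathfrak{g} = \mathfrak{r} \rtimes \mathfrak{s}$ by the Levi--Malcev theorem. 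The structural facts I would lean on are that for solvable $\mathfrak{r}$ one has $[\mathfrak{r},\mathfrak{r}] \subseteq \mathfrak{n}$ (so $\mathfrak{r}/\mathfrak{n}$ is abelian), and that $\mathfrak{a}^k$ is stable under every derivation of $\mathfrak{n}$, so each $\mathrm{ad}(x)$ with $x \in \mathfrak{g}$ descends to an operator on $M$. Using this I would invoke the Zassenhaus extension lemma to enlarge $M$ to a finite-dimensional module carrying a compatible action of $\mathfrak{r}$ that still restricts faithfully to $\mathfrak{n}$, then use Weyl complete reducibility for the semisimple $\mathfrak{s}$ to fold in its action on the semidirect factor; the semisimple part is separately faithful through its own adjoint representation, since the center of a semisimple algebra is trivial.

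The hard part will be the extension lemma (Zassenhaus), which is the genuine content of Ado's theorem: producing, from a faithful nilpotent representation of the ideal $\mathfrak{n}$, a finite-dimensional representation of the ambient algebra whose restriction to $\mathfrak{n}$ remains faithful. The delicate point is controlling finite-dimensionality while forcing all the lifted derivations to act compatibly on the enlarged module. Everything surrounding it — the central reduction, PBW, Levi--Malcev, and Weyl's complete reducibility theorem — is standard Lie-theoretic machinery that I would cite rather than reprove, so the proof effort concentrates almost entirely on this extension step.
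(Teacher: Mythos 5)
You should know at the outset that the paper contains no proof of this statement: Ado's theorem appears in the ``Mathematical Background'' appendix as a standard fact, implicitly cited to the Lie-theory references (e.g.\ \citet{Hall2015}), so there is no in-paper argument to compare against. Your outline is the classical textbook route (Jacobson, Bourbaki, Procesi): reduce to faithfulness on the center via $\rho \oplus \mathrm{ad}$ (using $\ker(\mathrm{ad}) = Z(\mathfrak{g})$), note $Z(\mathfrak{g}) \subseteq \mathfrak{n}$, build a faithful nil-representation of the nilradical on $U(\mathfrak{n})/\mathfrak{a}^k$, then climb through the solvable radical and a Levi complement using the Zassenhaus extension lemma together with Levi--Malcev and Weyl complete reducibility. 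The skeleton and the supporting structural facts you invoke ($[\mathfrak{r},\mathfrak{r}] \subseteq \mathfrak{n}$ in characteristic zero, stability of $\mathfrak{a}^k$ under derivations, nilpotency of left multiplication by elements of $\mathfrak{a}$ on the quotient, triviality of the center of $\mathfrak{s}$) are all correct.

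Two caveats. First, a misattribution: finite-dimensionality of $U(\mathfrak{n})/\mathfrak{a}^k$ has nothing to do with nilpotency --- by Poincar\'e--Birkhoff--Witt this quotient is spanned by monomials of degree $< k$ for \emph{any} finite-dimensional Lie algebra and any $k$. What nilpotency actually buys is faithfulness: one needs $\mathfrak{n} \cap \mathfrak{a}^k = 0$, which holds because $\mathfrak{n} \cap \mathfrak{a}^k$ lies in the $k$-th term of the lower central series, hence vanishes once $k$ exceeds the nilpotency class (faithfulness of the module action then follows by evaluating at $1 \in M$). Second, as you candidly admit, the proposal defers the entire load-bearing step: the Zassenhaus extension lemma, together with the bookkeeping that the nilradical of each intermediate subalgebra continues to act by nilpotent operators after every codimension-one extension up the radical --- this is precisely what keeps the lemma applicable at the next stage and at the final semidirect step with $\mathfrak{s}$. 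Given that the paper itself treats Ado's theorem as citable background, leaning on the lemma is a defensible choice; but judged as a self-contained proof, what you have is an accurate reduction of Ado's theorem to its hardest ingredient rather than a proof of it.
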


\begin{definition}[Matrix exponential]
  Given $A \in \mathbb{R}^{n \times n}$, the {\it matrix exponential\/} is the function
  $\exp: \mathbb{R}^{n \times n} \to \mathbb{R}^{n \times n}$ given by 
  \[
      \exp(A) = e^A = \sum_{k=0}^{\infty} \frac{A^k}{k!}
  \]
\end{definition}

\begin{definition}[Haar measure]
  Let $G$ be a locally compact group. Then the (unique up to scalars, nonzero, left-invariant) {\it Haar measure}
  on $G$ is the Borel measure $\mu$ satisfying the following
  \begin{enumerate}
      \item $\mu(x E) = \mu(E)$ for all $x \in G$ and all measurable $E \subseteq G$.
      \item $\mu(U) > 0$ for every non-empty open set $U \subseteq G$.
      \item $\mu(K) < \infty$ for every compact set $K \subseteq G$.
  \end{enumerate}
\end{definition}

\begin{proposition}
  Every Lie group is locally compact and thus comes equipped with a Haar measure.
\end{proposition}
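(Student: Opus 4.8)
The plan is to split the statement into its two asserted halves — local compactness of the underlying space of a Lie group, and the consequent existence of a Haar measure — and to dispatch each in turn, since the second is an immediate application of the classical existence theorem for locally compact groups once the first is in hand.

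First I would establish local compactness purely from the manifold structure, making no use of the group operations. Let $G$ be a Lie group and fix a point $p \in G$. By the definition of a smooth manifold, $G$ is Hausdorff and locally Euclidean, so there is a chart $(U, \varphi)$ with $p \in U$ and $\varphi : U \to \R^n$ a homeomorphism onto an open subset of $\R^n$ (in the convention of the excerpt, onto all of $\R^n$). Choose a closed ball $\bar{B}$ centered at $\varphi(p)$ with $\bar{B} \subseteq \varphi(U)$; such a ball exists because $\varphi(U)$ is open. Then $\varphi^{-1}(\bar{B})$ is compact, being the image of a compact set under the continuous map $\varphi^{-1}$, and it is a neighborhood of $p$ because it contains $\varphi^{-1}$ of the corresponding open ball, which is open in $G$. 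Hence every point of $G$ has a compact neighborhood, so $G$ is locally compact; together with the Hausdorff property this makes $G$ a locally compact Hausdorff space.

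Next I would observe that $G$ is a topological group: by the definition of a Lie group the multiplication $m : G \times G \to G$ and inversion $i : G \to G$ are smooth, hence continuous, so the algebraic structure is compatible with the topology. Combining this with the previous paragraph, $G$ is a locally compact Hausdorff topological group, which is exactly the hypothesis under which a Haar measure is guaranteed. Invoking the classical existence result recorded in the preceding definition — that every locally compact group carries a left-invariant Borel measure $\mu$, unique up to a positive scalar and satisfying the stated positivity and finiteness conditions — yields the desired Haar measure on $G$, completing the argument.

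The one genuinely nontrivial ingredient is the existence half of the Haar measure theorem itself, which is a deep classical result (due to Haar and Weil); I would treat it as a black box, since it is precisely what the surrounding definition of Haar measure presupposes. Everything specific to Lie groups — the reduction to local compactness and the continuity of the group operations — is elementary and follows directly from the definitions, so I do not anticipate any real obstacle beyond correctly citing that existence theorem.
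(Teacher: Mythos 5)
Your proof is correct. Note that the paper itself offers no proof of this proposition --- it is stated as background material in the appendix, with the existence half effectively presupposed by the preceding definition of the Haar measure --- so your argument is precisely the standard one a complete treatment would supply: local compactness follows from the Hausdorff and locally Euclidean properties of the underlying manifold (pulling back a closed ball through a chart), continuity of multiplication and inversion follows from their smoothness, and the Haar--Weil existence theorem for locally compact Hausdorff topological groups is correctly invoked as a black box. No gaps.
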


\subsection{Model Training \& Hyperparameter Tuning}
\subsubsection{Pendulum Trajectory Prediction}
For the pendulum trajectory prediction task, we performed a grid search over the following parameters across all
models excluding, to some extent, the standard CNN. For the standard CNN, we used a fixed architecture with three 
convolutional layers having a kernel size of 2 and having 32, 64, and 128 channels, respectively. This was followed by two linear layers
having weight matrices of sizes $128 \times 256$ and $256 \times 2$, respectively.

Each model was given a batch size of 16 and trained for 100 epochs. An 80\%/10\%/10\% train-validation-test split was used, with RMSE calculated on the test set
after the final epoch. The data was not shuffled due to this being a time series prediction task. Four random seeds were used at each step of the grid search, with average test set RMSE and standard deviations 
calculated with respect to the four random seeds. 
\begin{table}[h]
  \centering
  \begin{tabular}{@{}ccccc@{}}
  \toprule
  \textbf{Learning Rate} & \textbf{Optimizer} & \textbf{Kernel Sizes} & \textbf{Hidden Channels} & \textbf{\# Hidden Layers} \\ \midrule
  1e-4, 1e-3, 1e-2, 1e-1 & Adam, SGD          & 2, 3, 4, 5            & 16, 32                   & 1, 2, 3, 4                \\ \bottomrule
  \end{tabular}
  \vspace{0.25cm}
  \caption{Model hyperparameters used in grid search for the pendulum trajectory prediction task.}
  \label{tab:pendulum-grid-search}
  \end{table}

Below, we provide plots of train and validation RMSE as well as training loss for each of the best performing models.

\begin{figure}
\centering
\begin{tabular}{cc}
\includegraphics[width=65mm]{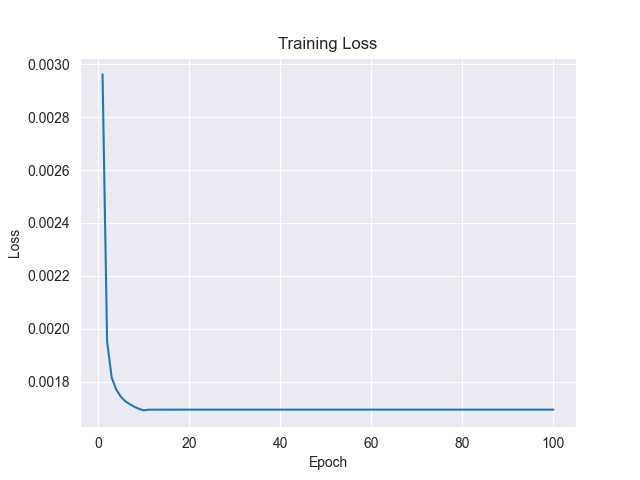} &   \includegraphics[width=65mm]{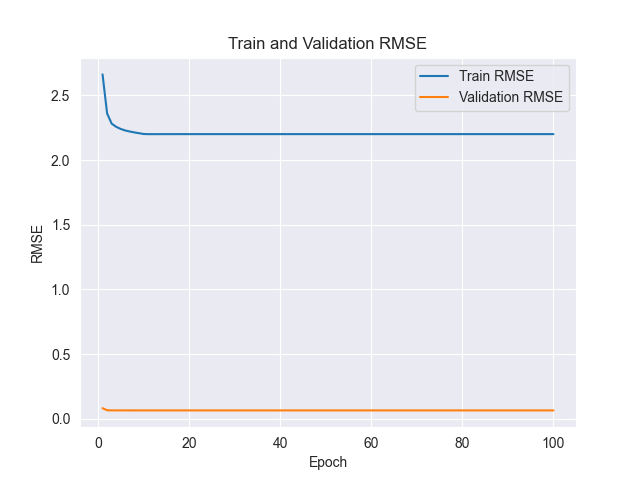} \\
(a) CNN Training Loss & (b) CNN RMSE \\[6pt]
\includegraphics[width=65mm]{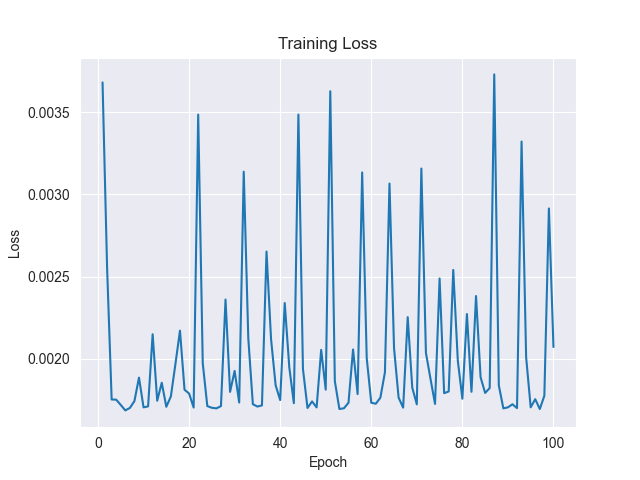} &   \includegraphics[width=65mm]{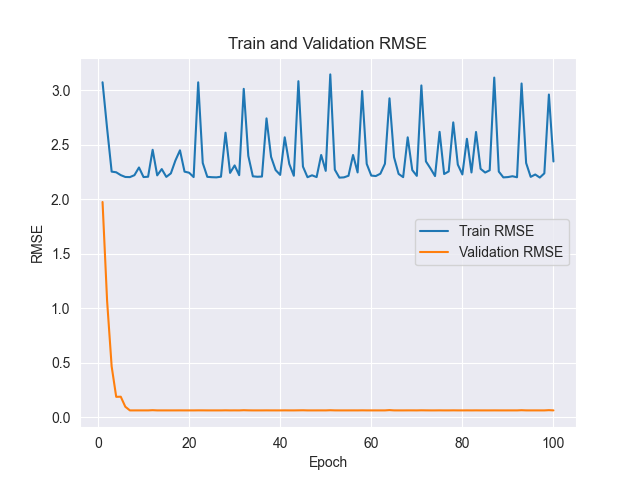} \\
(c) E2CNN Training Loss & (d) E2CNN RMSE \\[6pt]
\includegraphics[width=65mm]{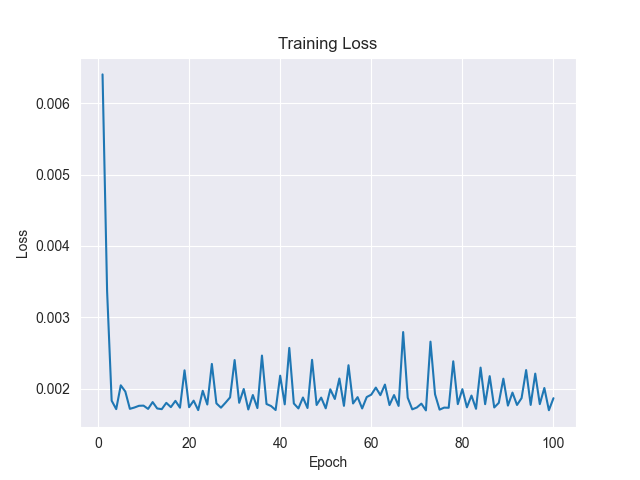} &   \includegraphics[width=65mm]{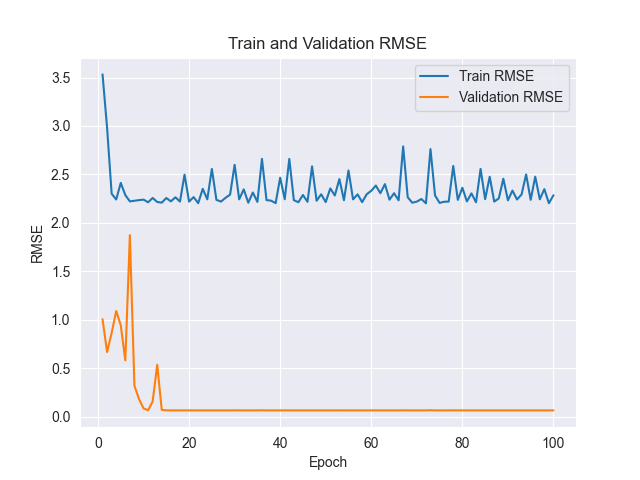} \\
(e) RPP Training Loss & (f) RPP RMSE \\[6pt]
\end{tabular}
\end{figure}

\begin{figure}
\centering
\begin{tabular}{cc}
\includegraphics[width=65mm]{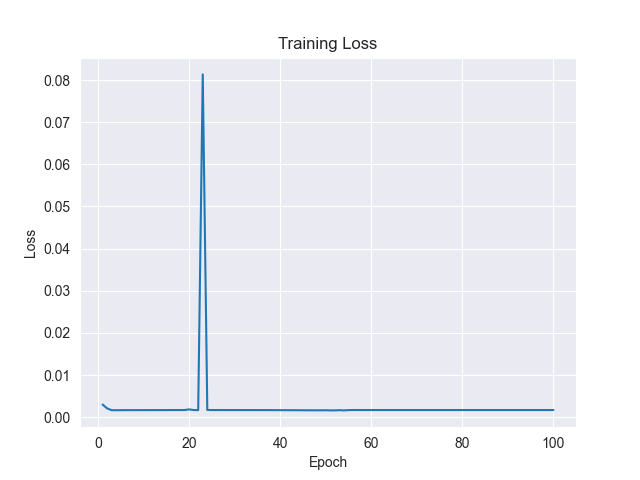} &   \includegraphics[width=65mm]{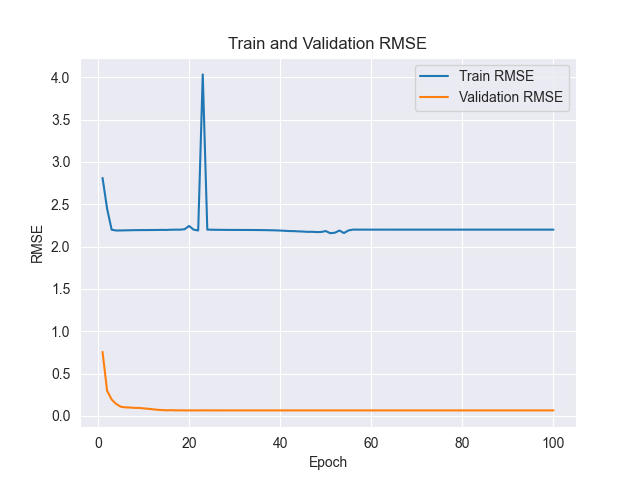} \\
(g) Approximately Equivariant G-CNN Training Loss & (h) Approximately Equivariant G-CNN RMSE \\[6pt]
\includegraphics[width=65mm]{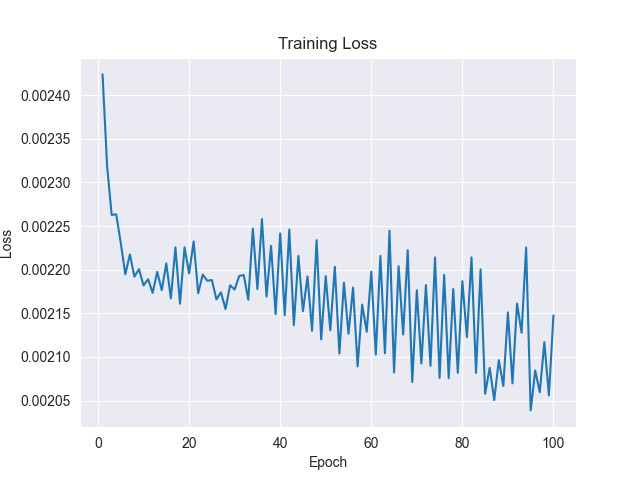} &   \includegraphics[width=65mm]{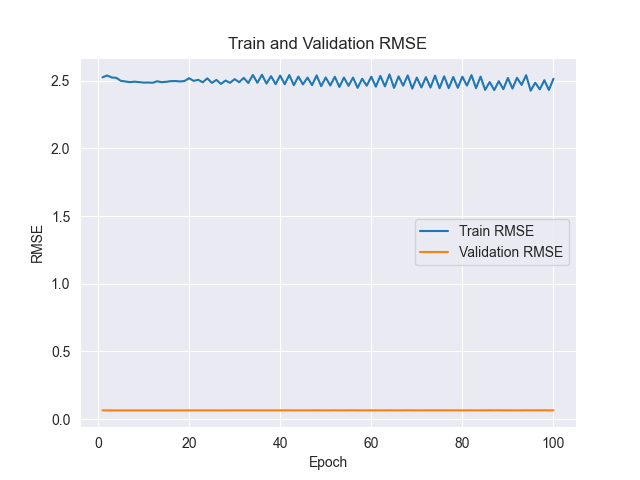} \\
(i) Almost Equivariant G-CNN Training Loss & (j) Almost Equivariant G-CNN RMSE \\[6pt]
\end{tabular}
\caption{Training Losses and Train/Validation RMSE across Epochs for Pendulum Trajectory Prediction}
\end{figure}

\subsubsection{Rotated MNIST Classification}
For the Rotated MNIST classification task, we performed a grid search over the following parameters across all
models excluding the standard CNN. For the standard CNN, we used a fixed architecture with two convolutional layers
having hidden channel counts of 32 and 64, respectively, and a kernel size of 3. The convolutional layers are followed by dropout
and two linear layers having weight matrices of sizes $9126 \times 128$ and $128 \times 10$, respectively.

Each model was trained for 200 epochs with a linear learning rate decay schedule. The standard 10k/2k/50k train-validation-test split was used, with classification accuracy calculated on the test set
after the final epoch.
\begin{table}[h]
  \centering
  \begin{tabular}{@{}cccccc@{}}
  \toprule
  \textbf{Learning Rate} & \textbf{Optimizer} & \textbf{Kernel Sizes} & \textbf{Hidden Channels} & \textbf{\# Hidden Layers} & \textbf{Batch Sizes} \\ \midrule
  1e-4, 1e-3, 1e-2, 1e-1 & Adam               & 3, 4, 5               & 16, 32                   & 1, 2, 3, 4                & 16, 32, 64           \\ \bottomrule
  \end{tabular}
  \vspace{0.25cm}
  \caption{Model hyperparameters used in grid search for the Rot-MNIST classification task.}
  \label{tab:rot-mnist-grid-search}
\end{table}

\subsection{Computational Resources}
All experiments were conducted on a single NVIDIA A100 GPU with 80GB of memory.

\end{document}